\newcommand{\comment}[1]{}
\newtheorem{theorem}{Theorem}
\newtheorem{lemma}{Lemma}
\newtheorem{assumption}{Assumption}
\def\eqref#1{equation~\ref{#1}}
\def\1{\mathbbm{1}}
\DeclareMathAlphabet{\mathsfit}{\encodingdefault}{\sfdefault}{m}{sl}
\SetMathAlphabet{\mathsfit}{bold}{\encodingdefault}{\sfdefault}{bx}{n}
\def\gD{{\mathcal{D}}}
\def\gG{{\mathcal{G}}}
\def\gP{{\mathcal{P}}}
\def\gX{{\mathcal{X}}}
\def\gY{{\mathcal{Y}}}
\newcommand{\E}{\mathbb{E}}
\newcommand{\R}{\mathbb{R}}
\DeclareMathOperator*{\argmin}{arg\,min}
\def\ytrue{y_{\mathrm{true}}}
\def\ybiased{y_{\mathrm{bias}}}
\def\ydummy{\hat{y}}
\def\X{\gX}
\def\Y{\gY}
\def\D{\gD}
\def\G{\gG}
\def\model{h}
\def\dkl{D_{\mathrm{KL}}}
\def\reals{\mathbb{R}}
\newtheorem{corollary}{Corollary}
\newtheorem{proposition}{Proposition}
\theoremstyle{definition}
\icmltitlerunning{Identifying and Correcting Label Bias in Machine Learning}
\begin{document}

\twocolumn[
\icmltitle{Identifying and Correcting Label Bias in Machine Learning}



\icmlsetsymbol{equal}{*}

\begin{icmlauthorlist}
\icmlauthor{Heinrich Jiang}{equal,gr}
\icmlauthor{Ofir Nachum}{equal,gb}
\end{icmlauthorlist}

\icmlaffiliation{gr}{Google Research, Mountain View, CA}
\icmlaffiliation{gb}{Google Brain, Mountain View, CA}

\icmlcorrespondingauthor{Heinrich Jiang}{heinrich.jiang@gmail.com}
\icmlcorrespondingauthor{Ofir Nachum}{ofirnachum@google.com}

\icmlkeywords{Fairness}

\vskip 0.3in
]



\printAffiliationsAndNotice{\icmlEqualContribution} 

\begin{abstract}
Datasets often contain biases which unfairly disadvantage certain groups, and classifiers trained on such datasets can inherit these biases. In this paper, we provide a mathematical formulation of how this bias can arise. We do so by assuming the existence of underlying, unknown, and unbiased labels which are overwritten by an agent who intends to provide accurate labels but may have biases against certain groups. Despite the fact that we only observe the biased labels, we are able to show that the bias may nevertheless be corrected by re-weighting the data points without changing the labels. We show, with theoretical guarantees, that training on the re-weighted dataset corresponds to training on the {\it unobserved} but {\it unbiased} labels, thus leading to an unbiased machine learning classifier. Our procedure is fast and robust and can be used with virtually any learning algorithm. We evaluate on a number of standard machine learning fairness datasets and a variety of fairness notions, finding that our method outperforms standard approaches in achieving fair classification.
\end{abstract}
\section{Introduction}

Machine learning has become widely adopted in a variety of real-world applications that significantly affect people's lives~\citep{tsa,loans}. Fairness in these algorithmic decision-making systems has thus become an increasingly important concern: It has been shown that without appropriate intervention during training or evaluation, models can be biased against certain groups~\citep{angwin,hardt2016equality}. 
This is due to the fact that the data used to train these models often contains biases that become reinforced into the model~\citep{bolukbasi2016man}. Moreover, it has been shown that simple remedies, such as ignoring the features corresponding to the protected groups, are largely
ineffective due to redundant encodings in the data \citep{pedreshi2008discrimination}.
In other words, the data can be inherently biased in possibly complex ways, thus making it difficult to achieve fairness.

Research on training fair classifiers has therefore received a great deal of attention. 
One such approach has focused on developing {\em post-processing} steps to enforce fairness on a learned model~\citep{doherty2012information,feldman2015computational,hardt2016equality}.  That is, one first trains a machine learning model, resulting in an unfair classifier.  The outputs of the classifier are then calibrated to enforce fairness.
Although this approach is likely to decrease the bias of the classifier, by decoupling the training from the fairness enforcement, this procedure may not lead to the best trade-off between fairness and accuracy.
Accordingly, recent work has proposed to incorporate fairness into the training algorithm itself, framing the problem as a constrained optimization problem and subsequently applying the method of {\em Lagrange multipliers} to transform the constraints to penalties~\citep{zafar2015fairness,goh2016satisfying,cotter2018two,agarwal2018reductions}; however such approaches may introduce undesired complexity and lead to more difficult or unstable training~\citep{cotter2018two,cotter2018optimization}.
Both of these existing methods address the problem of bias by adjusting the machine learning model rather than the data, despite the fact that oftentimes it is the training data itself -- i.e., the observed features and corresponding labels -- which are biased.

\begin{figure*}[t]
\centering
\includegraphics[width=2.09\columnwidth]{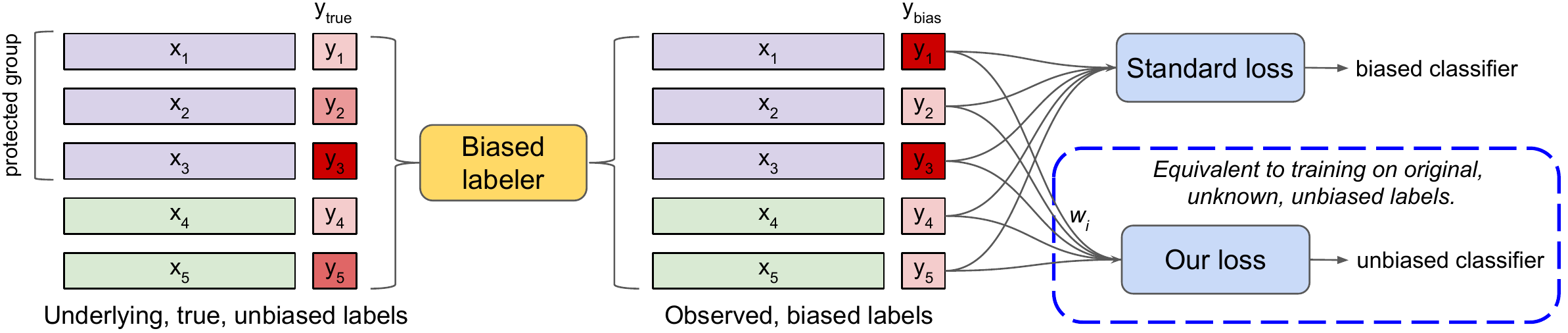}
\caption{In our approach to training an unbiased, fair classifier, we assume the existence of a true but unknown label function which has been adjusted by a biased process to produce the labels observed in the training data.  Our main contribution is providing a procedure that appropriately weights examples in the dataset, and then showing that training on the resulting loss corresponds to training on the original, true, unbiased labels.}
\label{fig:design}
\vspace{-0.2in}
\end{figure*}

In this paper, we provide an approach to machine learning fairness that addresses the underlying data bias problem directly.
We introduce a new mathematical framework for fairness in which we assume that there exists an {\it unknown} but {\it unbiased} ground truth label function and that the labels observed in the data are assigned by an agent who is possibly biased, but otherwise has the intention of being accurate. This assumption is natural in practice and may also be applied to settings where the features themselves are biased and that the observed labels were generated by a process depending on the features (i.e. situations where there is bias in both the features and labels). 

Based on this mathematical formulation, we show how one may identify the amount of bias in the training data as a closed form expression.  Furthermore, our derived form for the bias suggests that its correction may be performed by assigning appropriate weights to each example in the training data. We show, with theoretical guarantees, that training the classifier under the resulting weighted objective leads to an unbiased classifier on the original un-weighted dataset. Notably, many pre-processing approaches and even constrained optimization approaches (e.g. \citet{agarwal2018reductions}) optimize a loss which possibly modifies the observed labels or features, and doing so may be legally prohibited as it can be interpreted as training on {\it falsified} data; see~\citet{barocas2016big} (more details about this can be found in Section~\ref{section:related_work}). In contrast, our method {\em does not} 
modify any of the observed labels or features. Rather, we correct for the bias by changing the distribution of the sample points via re-weighting the dataset.


Our resulting method is general and can be applied to various notions of fairness, including demographic parity, equal opportunity, equalized odds, and  disparate impact. Moreover, the method is practical and simple to tune: With the appropriate example weights, any off-the-shelf classification procedure can be used on the weighted dataset to learn a fair classifier. Experimentally, we show that on standard fairness benchmark datasets and under a variety of fairness notions our method can outperform previous approaches to fair classification.   
\section{Background}
In this section, we introduce our framework for machine learning fairness, which explicitly assumes an unknown and unbiased ground truth label function.
We additionally introduce notation and definitions used in the subsequent presentation of our method.

\subsection{Biased and Unbiased Labels}
Consider a data domain $\X$ and an associated data distribution $\gP$.
An element $x\in\X$ may be interpreted as a feature vector associated with a specific example.
We let $\Y := \{0, 1\}$ be the labels, considering the binary classification setting, although our method may be readily generalized to other settings. 

We assume the existence of an {\em unbiased}, ground truth label function $\ytrue:\X \to [0, 1]$.
Although $\ytrue$ is the assumed ground truth, in general we do not have access to it.  Rather, our dataset has labels generated based on a {\em biased} label function $\ybiased:\X\to [0, 1]$.
Accordingly, we assume that our data is drawn as follows:
\begin{equation*}
(x,y)\sim \D \equiv x\sim\gP, y\sim \text{Bernoulli}(\ybiased(x)).
\end{equation*}
and we assume access to a finite sample $\D_{[n]} :=\{(x^{(i)}, y^{(i)})\}_{i=1}^n$ drawn from $\D$.

In a machine learning context, our directive is to use the dataset $\D$ to recover the unbiased, true label function $\ytrue$.
In general, the relationship between the desired $\ytrue$ and the observed $\ybiased$ is unknown.  Without additional assumptions, it is difficult to learn a machine learning model to fit $\ytrue$.
We will attack this problem in the following sections by proposing a minimal assumption on the relationship between $\ytrue$ and $\ybiased$. The assumption will allow us to derive an expression for $\ytrue$ in terms of $\ybiased$, and the form of this expression will immediately imply that correction of the label bias may be done by appropriately re-weighting the data.
We note that our proposed perspective on the problem of learning a fair machine learning model is conceptually different from previous ones.
While previous perspectives propose to train on the observed, biased labels and only enforce fairness as a penalty or as a post-processing step to the learning process, we take a more direct approach. Training on biased data can be inherently misguided, and thus we believe that our proposed perspective may be more appropriate and better aligned with the directives associated with machine learning fairness.

\subsection{Notions of Bias}
We now discuss precise ways in which $\ybiased$ can be biased. We describe a number of accepted notions of fairness; i.e., what it means for an arbitrary label function or machine learning model $\model:\X\to[0, 1]$ to be biased (unfair) or unbiased (fair). 

We will define the notions of fairness in terms of a {\em constraint function} $c:\X \times \Y \to \R$. 
Many of the common notions of fairness may be expressed or approximated as linear constraints on $\model$ (introduced previously by \citet{cotter2018optimization,goh2016satisfying}).  That is, they are of the form
\begin{equation*}
    \E_{x \sim \gP}\left[ \langle \model(x), c(x) \rangle \right] = 0,
\end{equation*}
where $\langle \model(x), c(x) \rangle := \sum_{y \in \mathcal{Y}} \model(y|x) c(x, y)$ and we use the shorthand $h(y|x)$ to denote the probability of sampling $y$ from a Bernoulli random variable with $p=h(x)$; i.e., $h(1|x) := h(x)$ and $h(0|x) := 1 - h(x)$.
Therefore, a label function $\model$ is {\it unbiased} with respect to the constraint function $c$ if $\E_{x \sim \gP}\left[ \langle \model(x), c(x) \rangle \right] = 0$.  If $\model$ is biased, the degree of bias (positive or negative) is given by $\E_{x \sim \gP}\left[ \langle \model(x), c(x) \rangle \right]$.

We define the notions of fairness with respect to a protected group $\G$, and thus assume access to an indicator function $g(x) = \1[x\in \G]$.
We use $Z_\G:=\E_{x\sim \gP}[g(x)]$ to denote the probability of a sample drawn from $\gP$ to be in $\G$.
We use
$P_\X=\E_{x \sim \gP}[\ytrue(x)]$ to denote the proportion of $\X$ which is positively labelled and
$P_\G=\E_{x\sim \gP}[g(x)\cdot \ytrue(x)]$ to denote the proportion of $\X$ which is positively labelled and in $\G$. 
We now give some concrete examples of accepted notions of constraint functions:\\
{\bf Demographic parity} \citep{dwork2012fairness}: A fair classifier $\model$ should make positive predictions on $\G$ at the same rate as on all of $\X$. The constraint function may be expressed as $c(x, 0) = 0$, $c(x, 1) = g(x)/Z_\G - 1$.\\
 {\bf Disparate impact} \citep{feldman2015certifying}: 
This is identical to demographic parity, only that, in addition, during inference the classifier does not have access to the features of $x$ indicating whether it belongs to the protected group.\\
{\bf Equal opportunity} \citep{hardt2016equality}: A fair classifier $\model$ should have equal true positive rates on $\G$ as on all of $\X$. The constraint may be expressed as $c(x,0) = 0$, $c(x,1) = g(x)\ytrue(x) / P_\G - \ytrue(x)/P_\X$.\\
{\bf Equalized odds} \citep{hardt2016equality}: A fair classifier $\model$ should have equal true positive and false positive rates on $\G$ as on all of $\X$.
    In addition to the constraint associated with equal opportunity, this notion applies an additional constraint with $c(x, 0) = 0$, $c(x, 1) = g(x)(1 - \ytrue(x))/(Z_\G - P_\G) - (1 - \ytrue(x)) / (1 - P_\X)$.
    
In practice, there are often multiple fairness constraints $\{c_k\}_{k=1}^K$ associated with multiple protected groups $\{\G_k\}_{k=1}^K$. It is clear that our subsequent results will assume multiple fairness constraints and protected groups, and that the protected  groups may have overlapping samples.

\vspace{-0.1in}
\section{Modeling How Bias Arises in Data}
We now introduce our underlying mathematical framework to understand bias in the data, by providing the relationship between $\ybiased$ and $\ytrue$ (Assumption~\ref{ass:bias} and Proposition~\ref{prop:closedform}).
This will allow us to derive a closed form expression for $\ytrue$ in terms of $\ybiased$ (Corollary~\ref{cor:closed}). In Section~\ref{sec:learning} we will show how this expression leads to a simple weighting procedure that uses data with biased labels to train a classifier with respect to the true, unbiased labels.

We begin with an assumption on the relationship between the observed $\ybiased$ and the underlying $\ytrue$.
\begin{assumption}
\label{ass:bias}
Suppose that our fairness constraints are $c_1,..,.c_K$, with respect to which $\ytrue$ is unbiased (i.e. $\E_{x \sim \gP}\left[ \langle \ytrue(x), c_k(x) \rangle  \right] = 0$ for $k \in [K]$). We assume that there 
exist $\epsilon_1,\dots,\epsilon_K\in\R$ such that the observed, biased label function $\ybiased$ is the solution of the following constrained optimization problem:
\begin{align*}
\argmin_{\ydummy:\X\to[0, 1]} & \E_{x\sim \gP}\left[\dkl(\ydummy(x) || \ytrue(x)) \right] \\
\mathrm{s.t.}~~ & \E_{x\sim \gP}\left[ \langle \ydummy(x), c_k(x) \rangle  \right] =\epsilon_k \\
& \mathrm{for}~ k=1,\dots,K,
\end{align*}
where we use $\dkl$ to denote the KL-divergence.
\end{assumption}
In other words, we assume that $\ybiased$ is the label function closest to $\ytrue$ while achieving some amount of bias, where proximity to $\ytrue$ is given by the KL-divergence.  
%
This is a reasonable assumption in practice, where the observed data may be the result of manual labelling done by actors (e.g. human decision-makers) who strive to provide an accurate label while being affected by (potentially unconscious) biases; or in cases where the observed labels correspond to a process (e.g. results of a written exam) devised to be accurate and fair, but which is nevertheless affected by inherent biases.

We use the KL-divergence to impose this desire to have an accurate labelling.  In general, a different divergence may be chosen.  However in our case, the choice of a KL-divergence allows us to derive the following proposition, which provides a closed-form expression for the observed $\ybiased$. 
The derivation of Proposition~\ref{prop:closedform} from Assumption~\ref{ass:bias} is standard and has appeared in previous works; e.g.  \citet{friedlander2006minimizing,botev2011generalized}. For completeness, we include the proof in the Appendix.
\begin{proposition}\label{prop:closedform}
Suppose that Assumption~\ref{ass:bias} holds.
Then $\ybiased$ satisfies the following for all $x \in \X$ and $y \in \Y$.
\vspace{-0.04in}
\begin{equation*}
    \ybiased(y|x) \propto \ytrue(y|x) \cdot \exp \left\{-\sum_{k=1}^K \lambda_k\cdot c_k(x,y) \right\}
\end{equation*}
\vspace{-0.06in}
for some $\lambda_1,\dots,\lambda_K\in \R$. 
\end{proposition}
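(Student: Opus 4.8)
The plan is to recognize Assumption~\ref{ass:bias} as a convex program---minimizing an expected KL-divergence subject to linear equality constraints---and to read off the claimed form from its stationarity (KKT) conditions. First I would form the Lagrangian, attaching a multiplier $\lambda_k\in\R$ to each constraint $\E_{x\sim\gP}[\langle\ydummy(x),c_k(x)\rangle]=\epsilon_k$:
\[
L(\ydummy,\lambda)=\E_{x\sim\gP}\left[\dkl(\ydummy(x)\|\ytrue(x))+\sum_{k=1}^K\lambda_k\langle\ydummy(x),c_k(x)\rangle\right]-\sum_{k=1}^K\lambda_k\epsilon_k.
\]
The key structural observation is that the objective and every constraint are expectations under the \emph{fixed} distribution $\gP$ of quantities that depend on $\ydummy$ only through its value $\ydummy(x)$ at the single point $x$. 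Hence, for any fixed $\lambda$, minimizing $L(\cdot,\lambda)$ over all functions $\ydummy:\X\to[0,1]$ decouples into an independent pointwise minimization at each $x$.

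Next I would solve this pointwise problem. Writing $\dkl(\ydummy(x)\|\ytrue(x))=\sum_{y\in\Y}\ydummy(y|x)\log\frac{\ydummy(y|x)}{\ytrue(y|x)}$ and $\langle\ydummy(x),c_k(x)\rangle=\sum_{y\in\Y}\ydummy(y|x)c_k(x,y)$, the per-$x$ objective over the Bernoulli distribution $\ydummy(\cdot|x)$ (subject to $\sum_{y}\ydummy(y|x)=1$) is
\[
\sum_{y\in\Y}\ydummy(y|x)\left(\log\frac{\ydummy(y|x)}{\ytrue(y|x)}+\sum_{k=1}^K\lambda_k c_k(x,y)\right).
\]
Introducing a multiplier $\nu(x)$ for the normalization constraint and setting the derivative with respect to $\ydummy(y|x)$ to zero gives $\log\frac{\ydummy(y|x)}{\ytrue(y|x)}+1+\sum_k\lambda_k c_k(x,y)+\nu(x)=0$, i.e. $\ydummy(y|x)=\ytrue(y|x)\exp\{-1-\nu(x)-\sum_k\lambda_k c_k(x,y)\}$. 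Since $\nu(x)$ depends only on $x$, it is exactly the ($x$-dependent) normalizing constant, so $\ydummy(y|x)\propto\ytrue(y|x)\exp\{-\sum_k\lambda_k c_k(x,y)\}$. Identifying this minimizer with the assumed solution $\ybiased$ yields the proposition.

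The main thing to justify carefully is the passage from the constrained program to the existence of finite multipliers $\lambda_k$ and the legitimacy of the pointwise reduction. Convexity of the KL-divergence in its first argument together with linearity of the constraints makes the program convex, so under a feasibility/Slater condition strong duality holds and a stationary point of the Lagrangian is globally optimal; Assumption~\ref{ass:bias} posits that an optimizer $\ybiased$ exists, which is what supplies these multipliers. I would also address the boundary constraint $\ydummy(y|x)\in[0,1]$: the exponential-tilting form automatically lands in $(0,1)$ wherever $\ytrue(y|x)\in(0,1)$, while points with $\ytrue(y|x)\in\{0,1\}$ contribute infinite divergence unless matched and so do not disturb the interior stationarity argument. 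This boundary and regularity bookkeeping is the only delicate part; the tilting algebra itself is routine and, as noted in the excerpt, standard.
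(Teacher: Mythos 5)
Your proposal is correct and follows essentially the same route as the paper: both form the Lagrangian of the constrained KL projection and identify the minimizer as the exponentially tilted (Boltzmann) distribution $\ydummy^*(y|x)\propto \ytrue(y|x)\exp\{-\sum_k \lambda_k c_k(x,y)\}$, with the paper citing this pointwise solution as a classic result while you derive it explicitly via the normalization multiplier and stationarity. Your added care about Slater/strong duality and the boundary cases $\ytrue(y|x)\in\{0,1\}$ is sound bookkeeping but does not change the argument.
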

Given this form of $\ybiased$ in terms of $\ytrue$, we can immediately deduce the form of $\ytrue$ in terms of $\ybiased$:
\begin{corollary}
\label{cor:closed}
Suppose that Assumption~\ref{ass:bias} holds.
The unbiased label function $\ytrue$ is of the form,
\vspace{-0.04in}
\begin{equation*}
    \ytrue(y|x) \propto \ybiased(y|x) \cdot \exp\left\{ \sum_{k=1}^K \lambda_k c_k(x,y) \right\},
\end{equation*}
\vspace{-0.06in}
for some $\lambda_1,\dots,\lambda_K\in \R$.
\end{corollary}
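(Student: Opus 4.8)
The plan is to obtain the corollary by directly inverting the closed-form expression for $\ybiased$ supplied by Proposition~\ref{prop:closedform}. The essential observation is that the symbol $\propto$ in that proposition hides a normalization factor which depends on $x$ but not on $y$: for each fixed $x$ there is a strictly positive constant $Z(x)$ such that
\begin{equation*}
\ybiased(y|x) = \frac{1}{Z(x)} \, \ytrue(y|x) \cdot \exp\left\{ -\sum_{k=1}^K \lambda_k c_k(x,y) \right\},
\end{equation*}
where $Z(x)$ is precisely the quantity that enforces $\ybiased(1|x) + \ybiased(0|x) = 1$ over the two values $y \in \Y$.

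First I would solve this identity for $\ytrue(y|x)$, which is immediate since the exponential factor is strictly positive and hence invertible:
\begin{equation*}
\ytrue(y|x) = Z(x) \, \ybiased(y|x) \cdot \exp\left\{ \sum_{k=1}^K \lambda_k c_k(x,y) \right\}.
\end{equation*}
Note that inversion simply flips the sign of the exponent, turning $-\sum_k \lambda_k c_k(x,y)$ into $+\sum_k \lambda_k c_k(x,y)$, and that the $\lambda_k$ appearing here are exactly those delivered by Proposition~\ref{prop:closedform}, so no new multipliers need to be introduced. Because $Z(x)$ does not depend on $y$, it may be absorbed into the proportionality constant, and reading the displayed equality as a statement about the dependence on $y$ for each fixed $x$ yields exactly the claimed form $\ytrue(y|x) \propto \ybiased(y|x) \cdot \exp\{\sum_k \lambda_k c_k(x,y)\}$.

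I do not expect any real obstacle here, as the corollary is an algebraic rearrangement of the proposition rather than a new derivation. The only point that warrants a line of care is verifying that the normalization factor $Z(x)$ is genuinely independent of $y$, so that proportionality over $y \in \Y$ is preserved under inversion; this is guaranteed because, for each fixed $x$, $Z(x)$ is the single constant chosen to normalize the distribution over the two-element label set $\Y$, and the same constant therefore governs both $\ybiased(\cdot|x)$ and the recovered $\ytrue(\cdot|x)$.
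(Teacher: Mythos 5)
Your proposal is correct and follows essentially the same route as the paper, which treats Corollary~\ref{cor:closed} as an immediate inversion of Proposition~\ref{prop:closedform}: divide by the strictly positive exponential factor (flipping the sign of the exponent) and absorb the $y$-independent normalizer $Z(x)$ into the proportionality over $y\in\Y$. Your explicit verification that $Z(x)$ does not depend on $y$ is exactly the one-line bookkeeping the paper leaves implicit, and the multipliers $\lambda_k$ are indeed the same ones from the proposition, just as you note.
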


We note that previous approaches to learning fair classifiers often formulate a constrained optimization problem similar to that appearing in Assumption~\ref{ass:bias} (i.e., maximize the accuracy or log-likelihood of a classifier subject to linear constraints) and subsequently solve it, usually via the method of Lagrange multipliers which translates the constraints to penalties on the training loss.  In our approach, rather than using the constrained optimization problem to formulate a machine learning objective, we use it to express the relationship between true (unbiased) and observed (biased) labels.  Furthermore, rather than training with respect to the biased labels, our approach aims to recover the true underlying labels.  As we will show in the following sections, this may be done by simply optimizing the training loss on a {\em re-weighting} of the dataset.
In contrast, the penalties associated with Lagrangian approaches can often be cumbersome: The original, non-differentiable, fairness constraints must be relaxed or approximated before conversion to penalties.  Even then, the derivatives of these approximations may be near-zero for large regions of the domain, causing difficulties during training.

\section{Learning Unbiased Labels}
\label{sec:learning}
We have derived a closed form expression for the true, unbiased label function $\ytrue$ in terms of the observed label function $\ybiased$, coefficients $\lambda_1,\dots,\lambda_K$, and constraint functions $c_1,\dots,c_K$.
In this section, we elaborate on how one may learn a machine learning model $\model$ to fit $\ytrue$, given access to a dataset $\D$ with labels sampled according to $\ybiased$.
We begin by restricting ourselves to constraints $c_1,\dots,c_K$ associated with demographic parity, allowing us to have full knowledge of these constraint functions.  In Section~\ref{sec:extension} we will show how the same method may be extended to general notions of fairness.

With knowledge of the functions $c_1,\dots,c_K$, it remains to determine the coefficients $\lambda_1,\dots,\lambda_K$ (which give us a closed form expression for the dataset weights) as well as the classifier $h$.
For simplicity, we present our method by first showing how a classifier $\model$ may be learned assuming knowledge of the coefficients $\lambda_1,\dots,\lambda_K$ (Section~\ref{subsection:learn_h_given_lambda}). We subsequently show how the coefficients themselves may be learned, thus allowing our algorithm to be used in general setting (Section~\ref{subsection:learn_lambda}). Finally, we describe how to extend to more general notions of fairness (Section~\ref{sec:extension}).

\subsection{Learning $\model$ Given $\lambda_1,\dots,\lambda_K$}\label{subsection:learn_h_given_lambda}
Although we have the closed form expression $\ytrue(y|x) \propto \ybiased(y|x) \exp\left\{ \sum_{k=1}^K \lambda_k c_k \right\}$ for the true label function, in practice we do not have access to the values $\ybiased(y|x)$ but rather only access to data points with labels sampled from $\ybiased(y|x)$.  We propose 
the {\em weighting} technique to train $\model$ on labels based on $\ytrue$.\footnote{See the Appendix for an alternative to the weighting technique -- the {\em sampling} technique, based on a coin-flip.}  
The weighting technique weights an example $(x,y)$ by the weight
$w(x,y)=\widetilde{w}(x,y) / \sum_{y^\prime\in\Y}\widetilde{w}(x,y^\prime)$,
where 
\begin{align*}
    \widetilde{w}(x,y^\prime) = \exp\bigg\{ \sum_{k=1}^K \lambda_k c_k(x,y^\prime) \bigg\}.
\end{align*}
We have the following theorem, which states that training a classifier on examples with biased labels weighted by $w(x,y)$ is equivalent to training a classifier on examples labelled according to the true, unbiased labels.  
\begin{theorem}
\label{thm:equivalent}
For any loss function $\ell$, training a classifier $\model$ on the weighted objective $\E_{x\sim\gP,y\sim\ybiased(x)}[w(x,y) \cdot \ell(h(x), y)]$ is equivalent to training the classifier on the objective $\E_{x\sim\tilde{\gP},y\sim\ytrue(x)}[\ell(h(x), y)]$ with respect to the underlying, true labels, for some distribution $\tilde{\gP}$ over $\X$.
\end{theorem}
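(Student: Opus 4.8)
The plan is to show that the weighted objective and the true-label objective coincide up to a positive multiplicative constant that is independent of the classifier $h$, so that the two minimization problems have identical solutions. Note first that the argument should be agnostic to the choice of $\ell$: since $\ell(h(x),y)$ is never manipulated --- only the weight and measure sitting in front of it --- handling a general loss is no harder than a specific one. First I would marginalize the weighted objective over the Bernoulli draw of $y$, writing it as a single expectation over $x\sim\gP$:
\begin{equation*}
\E_{x\sim\gP,\,y\sim\ybiased(x)}[w(x,y)\,\ell(h(x),y)] = \E_{x\sim\gP}\Big[\textstyle\sum_{y\in\Y}\ybiased(y|x)\,w(x,y)\,\ell(h(x),y)\Big].
\end{equation*}
The goal is then to recognize the inner sum as an expectation of $\ell(h(x),y)$ under $y\sim\ytrue(x)$, multiplied by a factor depending on $x$ alone.

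The key algebraic step invokes Corollary~\ref{cor:closed}. Writing $\widetilde{w}(x,y)=\exp\{\sum_k \lambda_k c_k(x,y)\}$, the corollary (read as an equality after fixing the per-$x$ normalizer) gives $\ybiased(y|x)\,\widetilde{w}(x,y) = \ytrue(y|x)\cdot Z_{\mathrm{t}}(x)$, where $Z_{\mathrm{t}}(x):=\sum_{y'}\ybiased(y'|x)\widetilde{w}(x,y')$ depends only on $x$. Substituting the definition $w(x,y)=\widetilde{w}(x,y)/\sum_{y'}\widetilde{w}(x,y')$ yields
\begin{equation*}
\ybiased(y|x)\,w(x,y) = \frac{\ybiased(y|x)\,\widetilde{w}(x,y)}{\sum_{y'}\widetilde{w}(x,y')} = \alpha(x)\,\ytrue(y|x), \qquad \alpha(x):=\frac{Z_{\mathrm{t}}(x)}{\sum_{y'}\widetilde{w}(x,y')}.
\end{equation*}
The crucial observation is that the leftover factor $\alpha(x)$ is independent of $y$: the per-$x$ normalization of the weights is exactly what collapses the $y$-dependence back into the conditional $\ytrue(y|x)$, leaving only a reweighting of the marginal over $x$.

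Plugging this back in, the weighted objective becomes $\E_{x\sim\gP}[\alpha(x)\,\E_{y\sim\ytrue(x)}[\ell(h(x),y)]]$. Finally I would define $\tilde{\gP}$ by $d\tilde{\gP}(x) := \alpha(x)\,d\gP(x)/C$ with $C:=\E_{x\sim\gP}[\alpha(x)]$; since $\widetilde{w}>0$ and $\ybiased,\ytrue$ are genuine conditional distributions, we have $\alpha(x)>0$ and $C>0$, so $\tilde{\gP}$ is a bona fide probability distribution on $\X$. With this choice the weighted objective equals $C\cdot\E_{x\sim\tilde{\gP},\,y\sim\ytrue(x)}[\ell(h(x),y)]$, and because $C$ does not depend on $h$, the two training problems share the same minimizers, which is the asserted equivalence.

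I expect the only real obstacle to be the normalization bookkeeping rather than any deep difficulty: one must check that $\alpha(x)>0$ (so that $\tilde{\gP}$ is truly a distribution and not a signed measure) and be explicit that ``equivalent'' here means agreement of the objectives up to the fixed, $h$-independent constant $C$, not exact equality of their values. Everything else is a direct substitution from Corollary~\ref{cor:closed} together with the definition of $w$.
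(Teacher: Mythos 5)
Your proposal is correct and follows essentially the same route as the paper's proof: your identity $\ybiased(y|x)\,w(x,y)=\alpha(x)\,\ytrue(y|x)$ is exactly the paper's $w(x,y)\,\ybiased(y|x)=\phi(x)\,\ytrue(y|x)$ with $\alpha(x)=\phi(x)=\sum_{y'}w(x,y')\ybiased(y'|x)$, followed by the same tilted marginal $\tilde{\gP}(x)\propto\phi(x)\gP(x)$ and the same $h$-independent constant $C=\E_{x\sim\gP}[\phi(x)]$. Your extra checks (positivity of $\alpha$, explicit normalization, and spelling out that ``equivalent'' means equality of objectives up to a fixed positive constant) are sound bookkeeping that the paper leaves implicit.
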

\begin{proof}
For a given $x$ and for any $y\in\Y$, due to Corollary~\ref{cor:closed} we have,
\begin{equation}
    w(x,y) \ybiased(y|x) = \phi(x) \ytrue(y|x),
\end{equation}
where $\phi(x) = \sum_{y^\prime \in \Y} w(x,y^\prime) \ybiased(y^\prime|x)$ depends only on $x$.
Therefore, letting $\tilde{\gP}$ denote the feature distribution $\tilde{\gP}(x)\propto\phi(x)\gP(x)$, we have,
\begin{multline}
    \E_{x\sim\gP,y\sim\ybiased(x)}[w(x,y) \cdot \ell(h(x), y)] = \\ C\cdot\E_{x\sim\tilde{\gP},y\sim\ytrue(x)}[\ell(h(x), y)], 
\end{multline}
where $C=\E_{x\sim\gP}[\phi(x)]$, and this completes the proof.
\end{proof}
Theorem~\ref{thm:equivalent} is a core contribution of our work.  It states that the bias in observed labels may be corrected in a simple and straightforward way: Just re-weight the training examples.  We note that Theorem~\ref{thm:equivalent} suggests that when we re-weight the training examples, we trade off the ability to train on unbiased labels for training on a slightly different distribution $\tilde{\gP}$ over features $x$.
In Section~\ref{sec:theory}, we will show that given some mild conditions, the change in feature distribution does not affect the bias of the final learned classifier.  Therefore, in these cases, training with respect to weighted examples with biased labels is equivalent to training with respect to the same examples and the true labels.


\subsection{Determining the Coefficients $\lambda_1,\dots,\lambda_K$}\label{subsection:learn_lambda}
We now continue to describe how to learn the coefficients $\lambda_1,\dots,\lambda_K$.
One advantage of our approach is that, in practice, $K$ is often small.  Thus, we propose to iteratively learn the coefficients so that the final classifier satisfies the desired fairness constraints either on the training data or on a validation set. We first discuss how to do this for demographic parity and will discuss extensions to other notions of fairness in Section~\ref{sec:extension}. See the full pseudocode for learning $\model$ and $\lambda_1,\dots,\lambda_K$ in Algorithm~\ref{alg:demparity}.

Intuitively, the idea is that if the positive prediction rate for a protected class $\G$ is lower  than the overall positive prediction rate, then the corresponding coefficient should be increased; i.e., if we increase the weights of the positively labeled examples of $\G$ and  decrease the weights of the negatively labeled examples of $\G$, then this will encourage the classifier to increase its accuracy on the positively labeled examples in $\G$, while the accuracy on the negatively labeled examples of $\G$ may fall. Either of these two events will cause the positive prediction rate on $\G$ to increase, and thus bring $\model$ closer to the true, unbiased label function.

Accordingly, Algorithm~\ref{alg:demparity} works by iteratively performing the following steps: (1) evaluate the demographic parity constraints; (2) update the coefficients by subtracting the respective constraint violation multiplied by a fixed step-size; (3) compute the weights for each sample based on these multipliers using the closed-form provided by Proposition~\ref{prop:closedform}; and (4) retrain the classifier given these weights.  

Algorithm~\ref{alg:demparity} takes in a classification procedure $H$, which given a dataset $D_{[n]} := \{(x_i,y_i)\}_{i=1}^n$ and weights $\{w_i\}_{i=1}^n$, outputs a classifier. In practice, $H$ can be any training procedure which minimizes a weighted loss function over some parametric function class (e.g. logistic regression).

Our resulting algorithm simultaneously minimizes the weighted loss and maximizes fairness via learning the coefficients, which may be interpreted as competing goals with different objective functions. Thus, it is a form of a non-zero-sum two-player game. The use of non-zero-sum two-player games in fairness was first proposed in~\citet{cotter2018two} for the Lagrangian approach.


\begin{algorithm}[t]
   \caption{Training a fair classifier for Demographic Parity, Disparate Impact, or Equal Opportunity.}
   \label{alg:demparity}
\begin{algorithmic}
   \STATE {\bf Inputs}: Learning rate $\eta$, number of loops $T$, training data $\D_{[n]} = \{(x_i, y_i)\}_{i=1}^N$, classification procedure $H$. constraints $c_1,...,c_K$ corresponding to protected groups $\G_1,...,\G_K$.
   \STATE Initialize $\lambda_1,...,\lambda_K$ to $0$ and $w_1 = w_2 = \cdots = w_n = 1$.
   \STATE Let $h := H(\D_{[n]}, \{w_i\}_{i=1}^n)$
   \FOR{$t = 1,...,T$}
   \STATE Let $\Delta_k := \E_{x\sim\D_{[n]}}[\langle h(x), c_k(x) \rangle]$ for $k \in [K]$.
   \STATE Update $\lambda_k = \lambda_k - \eta \cdot \Delta_k$ for $k  \in [K]$.
   \STATE Let $\widetilde{w_i} := \exp\left( \sum_{k=1}^K \lambda_k \cdot \1[x \in \G_k] \right)$ for $i \in [n]$
   \STATE Let $w_i = \widetilde{w_i} / (1 + \widetilde{w_i})$ if $y_i = 1$, otherwise $w_i = 1 / (1 + \widetilde{w_i})$ for $i \in [n]$
   \STATE Update $h = H(\D_{[n]}, \{w_i\}_{i=1}^n)$
   \ENDFOR 
   \STATE {\bf Return} $h$
\end{algorithmic}
\end{algorithm}

\subsection{Extension to Other Notions of Fairness}
\label{sec:extension}
The initial restriction to demographic parity was made so that the values of the constraint functions $c_1,\dots,c_K$ on any $x\in\X,y\in\Y$ would be known. We note that Algorithm~\ref{alg:demparity} works for disparate impact as well: The only change would be that the classifier does not have access to the protected attributes.
However, in other notions of fairness, such as equal opportunity or equalized odds, the constraint functions depend on $\ytrue$, which is unknown.  

For these cases, we propose to apply the same technique of iteratively re-weighting the loss to achieve the desired fairness notion, with the weights $w(x,y)$ on each example determined only by the protected attribute $g(x)$ and the observed label $y\in\Y$.
This is equivalent to using Theorem~\ref{thm:equivalent} to derive the same procedure presented in Algorithm~\ref{alg:demparity}, but approximating the unknown constraint function $c(x,y)$ as a piece-wise constant function $d(g(x),y)$, where $d:\{0,1\}\times\Y\to\reals$ is unknown.
Although we do not have access to $d$, we may treat $d(g(x),y)$ as an additional set of parameters -- one for each protected group attribute $g(x)\in\{0,1\}$ and each label $y\in\Y$.  
These additional parameters may be learned in the same way the $\lambda$ coefficients are learned.  In some cases, their values may be wrapped into the unknown coefficients.
For example, for equal opportunity, there is in fact no need for any additional parameters. 
On the other hand, for equalized odds, the unknown values for $\lambda_1,\dots,\lambda_K$ and $d_1,\dots,d_K$, are instead treated as unknown values for $\lambda_1^{TP},\dots,\lambda_K^{TP},\lambda_1^{FP},\dots,\lambda_K^{FP}$; i.e., separate coefficients for positively and negatively labelled points.
Due to space constraints, see the Appendix for further details on these and more general constraints.

\vspace{-0.1in}
\section{Theoretical Analysis}
\label{sec:theory}
In this section, we provide theoretical guarantees on a learned classifier $\model$ using the weighting technique. We show that with the coefficients $\lambda_1,...,\lambda_K$ that satisfy Proposition~\ref{prop:closedform}, training on the re-weighted dataset leads to a finite-sample non-parametric rates of consistency on the estimation error provided the classifier has sufficient flexibility. 


We need to make the following regularity assumption on the data distribution, which assumes that the data is supported on a compact set in $\mathbb{R}^D$ and $\ybiased$ is smooth (i.e. Lipschitz).

\begin{assumption}\label{assumption:smoothness}
$\X$ is a compact set over $\mathbb{R}^D$ and both $\ybiased(x)$ and  $\ytrue(x)$ are $L$-Lipschitz (i.e. $|\ybiased(x) - \ybiased(x')| \le L\cdot |x - x'|$).
\end{assumption}

 We now give the result. The proof is technically involved and is deferred to the Appendix due to space.
 
\begin{theorem}[Rates of Consistency]\label{theo:rates}
Let $0 < \delta < 1$. Let $\D_{[n]} = \{(x_i,y_i)\}_{i=1}^n$ be a sample drawn from $\D$. Suppose that Assumptions~\ref{ass:bias} and~\ref{assumption:smoothness} hold. 
Let $\mathcal{H}$ be the set of all $2L$-Lipschitz functions mapping $\mathcal{X}$ to $[0, 1]$. 
 Suppose that the constraints are $c_1,...,c_K$ and the corresponding coefficients $\lambda_1,...,\lambda_K$ satisfy Proposition~\ref{prop:closedform} where $-\Lambda  \le \lambda_k \le \Lambda$ for $k =1,...,K$ and some $\Lambda > 0$.
 Let $h^*$ be the optimal function in $\mathcal{H}$ under the weighted mean square error objective,
 where the weights satisfy Proposition~\ref{prop:closedform}.  
 Then there exists $C_0$ depending on $\mathcal{D}$ such that for $n$ sufficiently large depending on $\mathcal{D}$, we have with probability at least $1 - \delta$:
 \begin{align*}
   ||h^* - \ytrue||_2 \le  C_0\cdot \log(2/\delta)^{1/(2+D)}\cdot  n^{-1/(4+2D)}.
\end{align*}
where $||h - h'||_{2} :=  \E_{x \sim \mathcal{P}}[(h(x) - h'(x))^2]$.
\end{theorem}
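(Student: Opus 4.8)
The plan is to treat $h^*$ as the empirical minimizer over $\mathcal{H}$ of the weighted mean-square objective, to control its error by a uniform-convergence argument over the Lipschitz class, and then to convert the resulting excess-risk bound into the claimed $L^2$ estimate. First I would invoke Theorem~\ref{thm:equivalent} at the population level: writing $\bar{R}(h) := \E_{x\sim\gP,\,y\sim\ybiased(x)}[w(x,y)(h(x)-y)^2]$ for the population weighted risk, Theorem~\ref{thm:equivalent} gives $\bar{R}(h) = C\cdot\E_{x\sim\tilde{\gP},\,y\sim\ytrue(x)}[(h(x)-y)^2]$. By the bias--variance identity for squared loss, the minimizer of this risk over all functions is $\ytrue$ and the excess risk of any $h$ equals $C\cdot\E_{x\sim\tilde{\gP}}[(h(x)-\ytrue(x))^2]$. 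Crucially, Assumption~\ref{assumption:smoothness} makes $\ytrue$ be $L$-Lipschitz, so $\ytrue\in\mathcal{H}$; hence the approximation error vanishes and only the estimation error of empirical risk minimization remains to be controlled.

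Next I would bound the uniform deviation $\sup_{h\in\mathcal{H}}|\hat{R}(h)-\bar{R}(h)|$ between $\bar R$ and the empirical weighted risk $\hat R(h) := \frac1n\sum_i w(x_i,y_i)(h(x_i)-y_i)^2$. Because $-\Lambda\le\lambda_k\le\Lambda$ and each demographic-parity constraint $c_k$ is bounded, the weights lie in an interval $[w_{\min},w_{\max}]$ with $w_{\min}>0$, so the per-example weighted loss is bounded by $w_{\max}$ and is $2w_{\max}$-Lipschitz in $h(x)$. I would then take a sup-norm cover of $\mathcal{H}$ at scale $\tau$; the metric entropy of $2L$-Lipschitz functions on the compact set $\X\subset\mathbb{R}^D$ obeys $\log N(\tau)\le C\,\tau^{-D}$. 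A Hoeffding bound at each cover center, a union bound over the $N(\tau)$ centers, and the $O(\tau)$ discretization error together give, with probability at least $1-\delta$,
\begin{equation*}
\sup_{h\in\mathcal{H}}|\hat{R}(h)-\bar{R}(h)| \;\le\; C_1\left(\tau + \sqrt{\frac{\tau^{-D}+\log(2/\delta)}{n}}\right).
\end{equation*}

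Since $h^*$ minimizes $\hat R$ and $\ytrue\in\mathcal{H}$, the standard comparison $\bar{R}(h^*)-\bar{R}(\ytrue)\le 2\sup_{h}|\hat{R}(h)-\bar{R}(h)|$, combined with the first paragraph, yields $\E_{x\sim\tilde{\gP}}[(h^*(x)-\ytrue(x))^2]\le C_2(\tau + \sqrt{(\tau^{-D}+\log(2/\delta))/n})$. Balancing the discretization term against the complexity term at $\tau\sim(\log(2/\delta)/n)^{1/(2+D)}$ gives an excess risk of order $(\log(2/\delta)/n)^{1/(2+D)}$. Finally, because $\tilde{\gP}(x)\propto\phi(x)\gP(x)$ with $\phi(x)=\sum_{y'}w(x,y')\ybiased(y'|x)\in[w_{\min},w_{\max}]$ bounded away from $0$ and $\infty$, the density ratio $\gP/\tilde{\gP}$ is bounded; this lets me replace $\tilde{\gP}$ by $\gP$ at the cost of a constant, and taking square roots produces the stated rate $C_0\,\log(2/\delta)^{1/(2+D)}\,n^{-1/(4+2D)}$.

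The main obstacle is the empirical-process control over the infinite-dimensional class $\mathcal{H}$: for $D\ge 2$ the Dudley entropy integral diverges, so a naive chaining bound is unavailable, and the single-scale discretization above together with the explicit optimization of $\tau$ is what forces the exponent $1/(2+D)$ in the excess risk (hence $n^{-1/(4+2D)}$ after the square root). A secondary but essential technical point is establishing the two-sided boundedness of the weights and of $\phi$, since this is exactly what makes both the Hoeffding concentration step and the $\tilde{\gP}\to\gP$ transfer go through; here the range constraint $-\Lambda\le\lambda_k\le\Lambda$ and the compactness of $\X$ are used.
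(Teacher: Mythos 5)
Your proposal is correct, but it takes a genuinely different route from the paper's proof. You work at the population level: Theorem~\ref{thm:equivalent} plus the bias--variance identity for the square loss identifies the excess weighted risk of any $h$ with $C\cdot\E_{x\sim\tilde{\gP}}\bigl[(h(x)-\ytrue(x))^2\bigr]$, and since Assumption~\ref{assumption:smoothness} places $\ytrue$ inside the $2L$-Lipschitz class, everything reduces to a standard ERM excess-risk analysis: single-scale sup-norm discretization of $\mathcal{H}$ with Kolmogorov--Tikhomirov entropy $\log N(\tau)\lesssim\tau^{-D}$, Hoeffding plus a union bound, balancing at $\tau\sim(\log(2/\delta)/n)^{1/(2+D)}$, and finally a bounded-density-ratio transfer from $\tilde{\gP}$ back to $\gP$ using $\phi(x)=\sum_{y'}w(x,y')\ybiased(y'|x)\in[w_{\min},1]$. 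The paper never routes through Theorem~\ref{thm:equivalent}: it partitions $\X$ into hypercubes of diameter $R$, compares the empirical weighted objective $J$ to a cell-discretized surrogate $J_R$ via Lipschitzness and a per-cell Bernstein bound, observes that $\ytrue$ minimizes $J_R$ exactly (through the identity $\ytrue = w\ybiased/(w\ybiased+(1-w)(1-\ybiased))$ furnished by Proposition~\ref{prop:closedform}), and then exploits per-cell strong convexity --- the gap $J_R(h^*)-J_R(\ytrue)$ is a weighted sum of squared deviations $\epsilon_B^2$ whose curvature is lower-bounded by $\exp(-K\Lambda)/(1+\exp(-K\Lambda))$ --- before optimizing $R$ with Lemma~\ref{lemma:rms}. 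The two discretization scales play identical roles ($\tau^{-D}$ entropy versus $R^{-D}$ cell count), which is why both arguments land on the exponent $1/(2+D)$ and hence $n^{-1/(4+2D)}$ after taking square roots. What your route buys: it is modular, reuses Theorem~\ref{thm:equivalent}, and makes the empirical-to-population step explicit --- the paper's proof in fact stops at the empirical quantity $\frac{1}{n}\sum_{x\in X_{[n]}}(h^*(x)-\ytrue(x))^2$ and leaves the passage to the population norm under $\gP$ implicit, a gap your density-ratio argument fills. What the paper's route buys: explicit dependence on $K$ and $\Lambda$ through the curvature bound, no covering-number machinery, and a localization that extends verbatim to the manifold setting of Theorem~\ref{theo:rates_manifold} by swapping Lemma~\ref{lemma:rms} for Lemma~\ref{lemma:rms_manifold}, whereas your approach would need the corresponding intrinsic-dimension entropy estimate for Lipschitz functions on the submanifold. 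Both arguments share the same implicit requirement that the constraint functions $c_k$ be uniformly bounded, so that the weights and $\phi$ stay bounded away from zero; and both implicitly read the theorem's $\|\cdot\|_2$ as the root-mean-square quantity, since the stated rate is the square root of the excess-risk bound each method produces.
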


Thus, with the appropriate values of $\lambda_1$,...,$\lambda_K$ given by Proposition~\ref{prop:closedform}, we see that training with the weighted dataset based on these values will guarantee that the final classifier will be close to $\ytrue$. However, the above rate has a dependence on the dimension $D$, which may be unattractive in high-dimensional settings. If the data lies on a $d$-dimensional submanifold, then Theorem~\ref{theo:rates_manifold} below says that without any changes to the procedure, we will enjoy a rate that depends on the manifold dimension and independent of the ambient dimension. Interestingly, these rates are attained without knowledge of the manifold or its dimension.

\begin{theorem}[Rates on Manifolds]\label{theo:rates_manifold}
Suppose that all of the conditions of Theorem~\ref{theo:rates} hold and that in addition, $\mathcal{X}$ is a $d$-dimensional Riemannian submanifold of $\mathbb{R}^D$ with finite volume and finite condition number.
Then there exists $C_0$ depending on $\mathcal{D}$ such that for $n$ sufficiently large depending on $\mathcal{D}$, we have with probability at least $1 - \delta$:
 \begin{align*}
   ||h^* - \ytrue||_{2} \le  C_0\cdot \log(2/\delta)^{1/(2+d)}\cdot  n^{-1/(4+2d)}.
\end{align*}

\end{theorem}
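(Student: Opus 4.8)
The plan is to reuse the argument of Theorem~\ref{theo:rates} essentially verbatim, changing only the single place where the ambient dimension $D$ enters: the metric entropy of the hypothesis class $\mathcal{H}$. Since the estimator $h^*$, the weighting scheme, and the reduction to an unbiased estimation problem are all unchanged, the skeleton of the consistency proof carries over, and the exponent $D$ in the rate gets replaced by the covering exponent of the domain $\mathcal{X}$. When $\mathcal{X}$ is a $d$-dimensional submanifold with finite reach and volume, that exponent is $d$ rather than $D$, which is exactly the claimed improvement.

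Concretely, I would first recall the structure of the proof of Theorem~\ref{theo:rates}. Because $|\lambda_k|\le\Lambda$ and the constraint functions are bounded, the weights $w(x,y)$ — and hence the density ratio defining $\tilde{\gP}$ in Theorem~\ref{thm:equivalent} — are bounded above and below by positive constants, so $\|\cdot\|_2$ under $\gP$ and under $\tilde{\gP}$ are equivalent up to constants. Under Assumption~\ref{assumption:smoothness}, $\ytrue$ is $L$-Lipschitz and therefore lies in $\mathcal{H}$; for the squared loss this makes $\ytrue$ the population minimizer of the weighted risk, so there is no approximation error and $\|h^*-\ytrue\|_2^2$ is bounded by the excess risk of the empirical minimizer. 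A standard empirical-risk comparison then gives
\begin{equation*}
\|h^*-\ytrue\|_2^2 \le C\cdot \sup_{h\in\mathcal{H}} | R_n(h) - R(h) |,
\end{equation*}
where $R_n,R$ are the empirical and population weighted squared-loss risks. Controlling this uniform deviation by a sup-norm $\epsilon$-net of $\mathcal{H}$ of cardinality $N(\epsilon)$, a Hoeffding-plus-union-bound argument yields, with probability $1-\delta$, a bound of order $\epsilon + \sqrt{(\log N(\epsilon)+\log(1/\delta))/n}$. Balancing $\epsilon$ against the entropy term produces the stated rate, the exponent being governed entirely by how $\log N(\epsilon)$ grows in $1/\epsilon$.

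The one new ingredient is the metric entropy of $\mathcal{H}$ when $\mathcal{X}$ is the manifold. I would invoke the standard covering bound for a $d$-dimensional submanifold of $\mathbb{R}^D$ with finite volume and finite condition number: its $\epsilon$-covering number in the ambient metric is at most $C\epsilon^{-d}$ for small $\epsilon$, because the finite reach forces the intersection of $\mathcal{X}$ with a small Euclidean ball to behave like a $d$-dimensional disk, while the finite volume bounds the number of such pieces. Since every $h\in\mathcal{H}$ is $2L$-Lipschitz in the ambient metric, quantizing its values on such an $\epsilon$-net and propagating the Lipschitz constraint shows $\log N(\epsilon,\mathcal{H},\|\cdot\|_\infty)$ is of order $\epsilon^{-d}$. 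Substituting this into the balance above, with $d$ in place of $D$, gives $\|h^*-\ytrue\|_2^2$ of order $n^{-1/(2+d)}$ up to the $\delta$-factor, i.e. $\|h^*-\ytrue\|_2 \le C_0\cdot\log(2/\delta)^{1/(2+d)}\cdot n^{-1/(4+2d)}$, as claimed. The conceptual point is that $h^*$ is defined as the Lipschitz minimizer over functions on $\mathcal{X}$, so its complexity automatically adapts to the intrinsic geometry; the procedure needs no knowledge of the manifold or its dimension.

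The main obstacle is establishing the covering bound $N(\epsilon,\mathcal{X},\|\cdot\|)\le C\epsilon^{-d}$ rigorously from finite reach and volume, and then transferring it to the sup-norm entropy of the ambient-Lipschitz class; this is where the geometric hypotheses (condition number, finite volume) do all the work, ruling out sets that are $d$-dimensional yet pathologically wiggly in $\mathbb{R}^D$. Once that lemma is in hand, every remaining step is identical to the proof of Theorem~\ref{theo:rates}, with $D$ mechanically replaced by $d$.
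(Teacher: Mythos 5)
Your proposal is correct, and the final rate bookkeeping checks out (balancing $\epsilon$ against $\sqrt{\epsilon^{-d}/n}$ gives $\epsilon \asymp n^{-1/(2+d)}$ for the squared error, i.e.\ $n^{-1/(4+2d)}$ for the norm, with the stated $\log(2/\delta)$ dependence), but your route is genuinely different from the paper's. You prove the consistency theorem via a standard ERM excess-risk decomposition plus uniform convergence over a sup-norm $\epsilon$-net of the $2L$-Lipschitz class $\mathcal{H}$, so the dimension enters through the Kolmogorov--Tikhomirov entropy $\log N(\epsilon,\mathcal{H},\|\cdot\|_\infty)\lesssim \epsilon^{-d}$. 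The paper never covers the function class at all: it partitions the \emph{ambient space} into hypercubes of diameter $R$, defines a discretized risk $J_R$ evaluated at cube centers, controls $|J-J_R|$ by Lipschitzness plus a per-cube Bernstein concentration, compares $h^*$ to $\ytrue$ cell-by-cell, and the dimension enters only through the occupied-cube count $|\Omega_R|$ via an RMS--AM counting lemma (Lemma~\ref{lemma:rms}); the manifold theorem then consists solely of replacing that lemma with Lemma~\ref{lemma:rms_manifold}, which bounds $|\Omega_R|\lesssim \mathrm{Vol}_d(\mathcal{X})/R^{d}$ using the Niyogi--Smale--Weinberger ball-volume lower bound (Lemma~\ref{lemma:manifold_ball}) under finite condition number. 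Note that the ``main obstacle'' you flag---rigorously deriving the $O(\epsilon^{-d})$ covering bound from reach and volume---is exactly this cited lemma, so the geometric engine of the two proofs is identical even though the statistical machinery differs: reach forces a small ball's intersection with $\mathcal{X}$ to carry $d$-volume $\gtrsim r^d$, and finite total volume caps the count. What each approach buys: yours is modular (any entropy bound for $\mathcal{H}$ plugs in, so it would extend beyond Lipschitz classes) but requires minor care that the Lipschitz entropy bound holds without spurious $\log(1/\epsilon)$ factors (fine here since compactness plus finite reach give finitely many components); the paper's cube argument avoids any net over $\mathcal{H}$, works directly with empirical cell counts $|B\cap X_{[n]}|$ (yielding the empirical-norm bound before transferring to the population norm by smoothness), and makes the ``replace $D$ by $d$'' step a one-line substitution. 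Your preliminary reductions (weights bounded above and below via $|\lambda_k|\le\Lambda$, hence equivalence of the $\gP$- and $\tilde{\gP}$-norms, and $\ytrue\in\mathcal{H}$ so it is the population minimizer with zero approximation error) match the paper's, as does your closing observation that adaptivity to the intrinsic dimension is automatic.
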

\section{Related Work}\label{section:related_work}

Work in fair classification can be categorized into three approaches: post-processing of the outputs, the Lagrangian approach of transforming constraints to penalties, and pre-processing training data. 

{\bf Post-processing}: One approach to fairness is to perform a post-processing of the classifier outputs. Examples of previous work in this direction include \citet{doherty2012information,feldman2015computational,hardt2016equality}. However, this approach of calibrating the outputs to encourage fairness has limited flexibility. \citet{pleiss2017fairness} showed that a deterministic solution is only compatible with a single error constraint and thus cannot be applied to fairness notions such as equalized odds. Moreover, decoupling the training and calibration can lead to models with poor accuracy trade-off. In fact  \citet{woodworth2017learning} showed that in certain cases, post-processing can be provably suboptimal. Other works discussing the incompatibility of fairness notions include \citet{chouldechova2017fair,kleinberg2016inherent}. 

{\bf Lagrangian Approach}: There has been much recent work done on enforcing fairness by transforming the constrained optimization problem via the method of Lagrange multipliers. Some works \cite{zafar2015fairness,goh2016satisfying} apply this to the convex setting.
In the non-convex case, there is work which frames the constrained optimization problem as a two-player game~\citep{kearns2017preventing,agarwal2018reductions,cotter2018two} .
Related approaches include \citet{edwards2015censoring,corbett2017algorithmic,narasimhan2018learning}.
There is also recent work similar in spirit which encourages fairness by adding penalties to the objective; e.g, \citet{donini2018empirical} studies this for kernel methods and \citet{komiyama2018nonconvex} for linear models.
However, the fairness constraints are often irregular and have to be relaxed in order to optimize.
Notably, our method does not use the constraints directly in the model loss, and thus does not require them to be relaxed. 
Moreover, these approaches typically are not readily applicable to equality constraints as feasibility challenges can arise; thus, there is the added challenge of determining appropriate slack during training. Finally, the training can be difficult as \citet{cotter2018optimization} has shown that the Lagrangian may not even have a solution to converge to. 

When the classification loss and the relaxed constraints have the same form (e.g. a hinge loss as in~\citet{eban2017scalable}),
the resulting Lagrangian may be rewritten as a cost-sensitive classification, explicitly pointed out in \citet{agarwal2018reductions}, who show that the Lagrangian method reduces to solving an objective of the form $\sum_{i=1}^n w_i \1 [h(x_i) \neq y_i']$ for some non-negative weights $w_i$. In this setting, $y_i'$ may not necessarily be the true label, which may occur for example in demographic parity when the goal is to predict more positively within a protected group and thus may be penalized for predicting correctly on negative examples. While this may be a reasonable approach to achieving fairness, it could be interpreted as training a weighted loss on {\it modified} labels, which may be legally prohibited \cite{barocas2016big}.
Our approach is a non-negative re-weighting of the original loss (i.e., does not modify the observed labels) and is thus simpler and more aligned with legal standards. 

{\bf Pre-processing:} This approach has primarily involved massaging the data to remove bias. Examples include \citet{calders2009building,kamiran2009classifying,vzliobaite2011handling,kamiran2012data,zemel2013learning,fish2015fair,feldman2015certifying,beutel2017data}. Many of these approaches involve changing the labels and features of the training set, which may have legal implications since it is a form of training on falsified data \cite{barocas2016big}. Moreover, these approaches typically do not perform as well as the state-of-art and have thus far come with few theoretical guarantees \cite{krasanakis2018adaptive}.
In contrast, our approach does not modify the training data and only re-weights the importance of certain sensitive groups. Our approach is also notably based on a mathematically grounded formulation of how the bias arises in the data. 

\section{Experiments}

\begin{table*}
\setlength{\tabcolsep}{5pt}
\begin{center}
  \caption{{\bf Experiment Results: Benchmark Fairness Tasks}: Each row corresponds to a dataset and fairness notion.  We show the accuracy and fairness violation of training with no constraints (Unc.), with  post-processing calibration (Cal.), the Lagrangian approach (Lagr.) and our method. {\bf Bolded} is the method achieving lowest fairness violation for each row. All reported numbers are evaluated on the test set.}
  \label{tab:experiments}
  \begin{tabular}{|l|l|rr|rr|rr|rr|}
 \hline
Dataset & Metric & Unc. Err. & Unc. Vio. & Cal. Err. & Cal. Vio. & Lagr. Err. & Lagr. Vio. & Our Err. &  Our Vio. \\
\hline
Bank & Dem. Par. & 9.41\%  & .0349 & 9.70\% & .0068 & 10.46\% & .0126 &  9.63\% & {\bf  .0056} \\
 & Eq. Opp. & 9.41\%  & .1452 & 9.55\% & .0506 & 9.86\% & .1237 &  9.48\% & {\bf .0431} \\
 & Eq. Odds & 9.41\%  & .1452 & N/A & N/A  & 9.61\% & .0879 &  9.50\% & {\bf .0376} \\
 & Disp. Imp. & 9.41\%  & .0304 & N/A & N/A  & 10.44\% & .0135 &  9.89\% & {\bf .0063} \\
\hline
COMPAS  & Dem. Par. & 31.49\%  & .2045 & 32.53\% & .0201 & 40.16\% & .0495 & 35.44\% & {\bf .0155} \\
  &Eq. Opp. & 31.49\%  & .2373 & 31.63\% & {\bf .0256} & 36.92\% & .1141 & 33.63\% & .0774 \\
  & Eq. Odds & 31.49\%  & .2373 &  N/A & N/A  & 42.69\% & {\bf .0566} & 35.06\% & .0663 \\
  & Disp. Imp. & 31.21\%  & .1362 &  N/A & N/A  & 40.35\% & .0499 & 42.64\% & {\bf .0256} \\
\hline
Communities & Dem. Par.& 11.62\% & .4211 & 32.06\%  & .0653 &   28.46\% & .0519 & 30.06\% & {\bf .0107} \\
  & Eq. Opp.& 11.62\% & .5513 & 17.64\%  & {\bf .0584} & 28.45\% & .0897 & 26.85\% & .0833 \\
  & Eq. Odds & 11.62\% & .5513 &  N/A & N/A  & 28.46\% & .0962 & 26.65\% & {\bf .0769} \\
  & Disp. Imp.& 14.83\% & .3960 &  N/A & N/A  & 28.26\% & .0557 & 30.26\% & {\bf .0073} \\
\hline
German Stat.  & Dem. Par. & 24.85\% & .0766 & 24.85\%& .0346 & 25.45\%  &  .0410 & 25.15\% &  {\bf .0137} \\
  &Eq. Opp. & 24.85\% & .1120 & 24.54\% & .0922 & 27.27\%  &  .0757 & 25.45\% &  {\bf .0662} \\
  & Eq. Odds & 24.85\% & .1120 &  N/A & N/A  & 34.24\%  &  .1318 & 25.45\% & {\bf .1099} \\
  & Disp. Imp. & 24.85\% & .0608 &  N/A & N/A  & 27.57\%  &  .0468 & 25.15\% &  {\bf .0156}\\
  \hline
  Adult  & Dem. Par. & 14.15\% & .1173 & 16.60\% & .0129 & 20.47\% & .0198 & 16.51\% & {\bf .0037} \\
  & Eq. Opp. & 14.15\% & .1195 & 14.43\% & .0170 & 19.67\% & .0374 & 14.46\% & {\bf .0092} \\
  & Eq. Odds & 14.15\% & .1195 &  N/A & N/A & 19.04\% & {\bf .0160} & 14.58\% & .0221 \\
  & Disp. Imp. & 14.19\% & .1108 &  N/A & N/A & 20.48\% & {\bf .0199} & 17.37\% & .0334 \\
\hline
\end{tabular}
\end{center}
\vspace{-0.2in}
\end{table*}


\subsection{Datasets}

{\bf Bank Marketing} \citep{lichman2013uci} ($45211$ examples). The data is based on a direct marketing campaign of a banking institution. The task is to predict whether someone will subscribe to a bank product. We use age as a protected attribute: $5$ protected groups are determined based on uniform age quantiles.

{\bf Communities and Crime} \citep{lichman2013uci} ($1994$ examples). Each datapoint represents a community and the task is to predict whether a community has high (above the $70$-th percentile) crime rate. We pre-process the data consistent with previous works, e.g. \citet{cotter2018training} and form the protected group based on race in the same way as done in \citet{cotter2018training}. 
We use four race features as real-valued protected attributes corresponding to percentage of White, Black, Asian and Hispanic. We threshold each 
at the median to form $8$ protected groups. 

{\bf ProPublica’s COMPAS} \cite{propublica2018} Recidivism data ($7918$ examples).
The task is to predict recidivism based on 
criminal history, jail and prison time, demographics,
and risk scores. The protected groups are two race-based (Black, White) and two gender-based (Male, Female).

{\bf German Statlog Credit Data} \citep{lichman2013uci} ($1000$ examples). The task is to predict whether an individual is a good or bad credit risk given attributes related to the individual's financial situation. We form two protected groups based on an age cutoff of $30$ years.

{\bf Adult} \citep{lichman2013uci} ($48842$
examples). The task is to predict whether the person's income is more than $50$k per year.
We use $4$ protected groups based on gender (Male and Female) and race (Black and White). We follow an identical procedure to \citet{zafar2015fairness,goh2016satisfying} to pre-process the dataset. 

\subsection{Baselines}

For all of the methods except for the Lagrangian, we train using Scikit-Learn's Logistic Regression \cite{pedregosa2011scikit} with default hyperparameter settings. We test our method against the unconstrained baseline, post-processing calibration, and the Lagrangian approach with hinge relaxation of the constraints. For all of the methods, we fix the hyperparameters across all experiments. For implementation details and hyperparameter settings, see the Appendix.

\subsection{Fairness Notions}

For each dataset and method, we evaluate our procedures with respect to demographic parity, equal opportunity, equalized odds, and disparate impact. 
As discussed earlier, the post-processing calibration method cannot be readily applied to disparate impact or equalized odds (without added complexity and randomized classifiers) so we do not show these results.

\subsection{Results}

\begin{figure}[t]
  \begin{center}
    \includegraphics[width=0.4\textwidth]{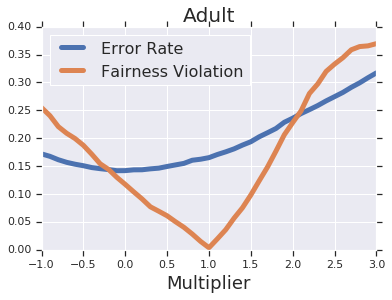}
    \vspace{-0.2in}
    \end{center}
  \caption{{\bf Results as $\lambda$ changes}: We show test error and fairness violations for demographic parity on Adult as the weightings change. We take the optimal $\lambda = \lambda^*$ found by Algorithm~\ref{alg:demparity}. Then for each value $x$ on the $x$-axis, we train a classifier with data weights based on the setting $\lambda = x\cdot \lambda^*$ and plot the error and violations. We see that indeed, when $x = 1$, we train based on the $\lambda$ found by Algorithm~\ref{alg:demparity} and thus get the lowest fairness violation. On the other hand, $x=0$ corresponds to training on the unweighted dataset and gives us the lowest prediction error. Analogous charts for the rest of the datasets can be found in the Appendix. }
	\label{fig:charts_full}
    \vspace{-0.2in}
\end{figure}

We present the results in Table~\ref{tab:experiments}.  We see that our method consistently leads to more fair classifiers, often yielding a classifier with the lowest test violation out of all methods.  
We also include test error rates in the results.  Although the primary objective of these algorithms is to yield a fair classifier, we find that our method is able to find reasonable trade-offs between fairness and accuracy. Our method often provides either better or comparative predictive error than the other fair classification methods (see Figure~\ref{fig:charts_full} for more insight into the trade-offs found by our algorithm).

The results in Table~\ref{tab:experiments} also highlight the disadvantages of existing methods for training fair classifiers.  Although the calibration method is an improvement over an unconstrained model, it is often unable to find a classifier with lowest bias.  

We also find the results of the Lagrangian approach to not consistently provide fair classifiers.  
As noted in previous work~\cite{cotter2018optimization,goh2016satisfying}, constrained optimization can be inherently unstable or requires a certain amount of slack in the objective as the constraints are typically relaxed to make gradient-based training possible and for feasibility purposes. Moreover, due to the added complexity of this method, it can overfit and have poor fairness generalization as noted in~\cite{cotter2018training}. 
Accordingly, we find that the Lagrangian method often yields poor trade-offs in fairness and accuracy, at times yielding classifiers with both worse accuracy and more bias. 

\vspace{-0.1in}
\section{MNIST with Label Bias}
\vspace{-0.05in}
We now investigate the practicality of our method on a larger dataset.
We take the MNIST dataset under the standard train/test split and then randomly select $20\%$ of the training data points and change their label to $2$, yielding a biased set of labels. On such a dataset, our method  should be able to find appropriate weights so that training on the weighted dataset roughly corresponds to training on the true labels.
To this end, we train a classifier with a demographic-parity-like constraint on the predictions of digit $2$; i.e., we encourage a classifier to predict the digit $2$ at a rate of $10\%$, the rate appearing in the true labels. 
We compare to the same baseline methods as before. 
See the Appendix for further experimental details.
\begin{table}
\vspace{-0.2in}
\setlength{\tabcolsep}{5pt}
\begin{center}
  \caption{{\bf MNIST with Label Bias}}
  \label{tab:mnist}
  \begin{tabular}{|l|r|}
 \hline
Method & Test Accuracy \\
\hline
Trained on True Labels & 97.85\% \\
\hline
Unconstrained &  88.18\% \\
Calibration & 89.79\% \\
Lagrangian & 94.05\% \\
Our Method & {\bf 96.16\%} \\
\hline
\end{tabular}
\end{center}
\vspace{-0.3in}
\end{table}
We present the results in Table~\ref{tab:mnist}.
We report test set accuracy computed with respect to the true labels.
We find that our method is the only one that is able to approach the accuracy of a classifier trained with respect to the true labels.
Compared to the Lagrangian approach or calibration, our method is able to improve error rate by over half.  Even compared to the next best method (the Lagrangian), our proposed technique improves error rate by roughly 30\%.
These results give further evidence of the ability of our method to effectively train on the underlying, true labels despite only observing biased labels.
\vspace{-0.1in}
\section{Conclusion}
\vspace{-0.05in}
We presented a new framework to model how bias can arise in a dataset, assuming that there exists an unbiased ground truth. Our method for correcting for this bias is based on re-weighting the training examples. Given the appropriate weights, we showed with finite-sample guarantees that the learned classifier will be approximately unbiased. We gave practical procedures which approximate these weights and showed that the resulting algorithm leads to fair classifiers in a variety of settings.

\clearpage

{\large \bf Acknowledgements}

We thank Maya Gupta, Andrew Cotter and Harikrishna Narasimhan for many insightful discussions and suggestions as well as Corinna Cortes for helpful comments.
{
\bibliography{ref}

\begin{thebibliography}{44}
\providecommand{\natexlab}[1]{#1}
\providecommand{\url}[1]{\texttt{#1}}
\expandafter\ifx\csname urlstyle\endcsname\relax
  \providecommand{\doi}[1]{doi: #1}\else
  \providecommand{\doi}{doi: \begingroup \urlstyle{rm}\Url}\fi

\bibitem[Agarwal et~al.(2018)Agarwal, Beygelzimer, Dud{\'\i}k, Langford, and
  Wallach]{agarwal2018reductions}
Alekh Agarwal, Alina Beygelzimer, Miroslav Dud{\'\i}k, John Langford, and Hanna
  Wallach.
\newblock A reductions approach to fair classification.
\newblock \emph{arXiv preprint arXiv:1803.02453}, 2018.

\bibitem[Angwin et~al.(2016)Angwin, Larson, Mattu, and Kirchner]{angwin}
Julia Angwin, Jeff Larson, Surya Mattu, and Lauren Kirchner.
\newblock Machine bias.
\newblock
  \url{https://www.propublica.org/article/machine-bias-risk-assessments-in-/criminal-sentencing},
  May 2016.
\newblock (Accessed on 07/18/2018).

\bibitem[Balakrishnan et~al.(2013)Balakrishnan, Narayanan, Rinaldo, Singh, and
  Wasserman]{balakrishnan2013cluster}
Sivaraman Balakrishnan, Srivatsan Narayanan, Alessandro Rinaldo, Aarti Singh,
  and Larry Wasserman.
\newblock Cluster trees on manifolds.
\newblock In \emph{Advances in Neural Information Processing Systems}, pages
  2679--2687, 2013.

\bibitem[Barocas and Selbst(2016)]{barocas2016big}
Solon Barocas and Andrew~D Selbst.
\newblock Big data's disparate impact.
\newblock \emph{Cal. L. Rev.}, 104:\penalty0 671, 2016.

\bibitem[Beutel et~al.(2017)Beutel, Chen, Zhao, and Chi]{beutel2017data}
Alex Beutel, Jilin Chen, Zhe Zhao, and Ed~H Chi.
\newblock Data decisions and theoretical implications when adversarially
  learning fair representations.
\newblock \emph{arXiv preprint arXiv:1707.00075}, 2017.

\bibitem[Bolukbasi et~al.(2016)Bolukbasi, Chang, Zou, Saligrama, and
  Kalai]{bolukbasi2016man}
Tolga Bolukbasi, Kai-Wei Chang, James~Y Zou, Venkatesh Saligrama, and Adam~T
  Kalai.
\newblock Man is to computer programmer as woman is to homemaker? debiasing
  word embeddings.
\newblock In \emph{Advances in Neural Information Processing Systems}, pages
  4349--4357, 2016.

\bibitem[Botev and Kroese(2011)]{botev2011generalized}
Zdravko~I Botev and Dirk~P Kroese.
\newblock The generalized cross entropy method, with applications to
  probability density estimation.
\newblock \emph{Methodology and Computing in Applied Probability}, 13\penalty0
  (1):\penalty0 1--27, 2011.

\bibitem[Calders et~al.(2009)Calders, Kamiran, and
  Pechenizkiy]{calders2009building}
Toon Calders, Faisal Kamiran, and Mykola Pechenizkiy.
\newblock Building classifiers with independency constraints.
\newblock In \emph{Data mining workshops, 2009. ICDMW'09. IEEE international
  conference on}, pages 13--18. IEEE, 2009.

\bibitem[Chouldechova(2017)]{chouldechova2017fair}
Alexandra Chouldechova.
\newblock Fair prediction with disparate impact: A study of bias in recidivism
  prediction instruments.
\newblock \emph{Big data}, 5\penalty0 (2):\penalty0 153--163, 2017.

\bibitem[Corbett-Davies et~al.(2017)Corbett-Davies, Pierson, Feller, Goel, and
  Huq]{corbett2017algorithmic}
Sam Corbett-Davies, Emma Pierson, Avi Feller, Sharad Goel, and Aziz Huq.
\newblock Algorithmic decision making and the cost of fairness.
\newblock In \emph{Proceedings of the 23rd ACM SIGKDD International Conference
  on Knowledge Discovery and Data Mining}, pages 797--806. ACM, 2017.

\bibitem[Cotter et~al.(2018{\natexlab{a}})Cotter, Gupta, Jiang, Srebro,
  Sridharan, Wang, Woodworth, and You]{cotter2018training}
Andrew Cotter, Maya Gupta, Heinrich Jiang, Nathan Srebro, Karthik Sridharan,
  Serena Wang, Blake Woodworth, and Seungil You.
\newblock Training well-generalizing classifiers for fairness metrics and other
  data-dependent constraints.
\newblock \emph{arXiv preprint arXiv:1807.00028}, 2018{\natexlab{a}}.

\bibitem[Cotter et~al.(2018{\natexlab{b}})Cotter, Jiang, and
  Sridharan]{cotter2018two}
Andrew Cotter, Heinrich Jiang, and Karthik Sridharan.
\newblock Two-player games for efficient non-convex constrained optimization.
\newblock \emph{arXiv preprint arXiv:1804.06500}, 2018{\natexlab{b}}.

\bibitem[Cotter et~al.(2018{\natexlab{c}})Cotter, Jiang, Wang, Narayan, Gupta,
  You, and Sridharan]{cotter2018optimization}
Andrew Cotter, Heinrich Jiang, Serena Wang, Taman Narayan, Maya Gupta, Seungil
  You, and Karthik Sridharan.
\newblock Optimization with non-differentiable constraints with applications to
  fairness, recall, churn, and other goals.
\newblock \emph{arXiv preprint arXiv:1809.04198}, 2018{\natexlab{c}}.

\bibitem[Doherty et~al.(2012)Doherty, Kartasheva, and
  Phillips]{doherty2012information}
Neil~A Doherty, Anastasia~V Kartasheva, and Richard~D Phillips.
\newblock Information effect of entry into credit ratings market: The case of
  insurers' ratings.
\newblock \emph{Journal of Financial Economics}, 106\penalty0 (2):\penalty0
  308--330, 2012.

\bibitem[Donini et~al.(2018)Donini, Oneto, Ben-David, Shawe-Taylor, and
  Pontil]{donini2018empirical}
Michele Donini, Luca Oneto, Shai Ben-David, John Shawe-Taylor, and Massimiliano
  Pontil.
\newblock Empirical risk minimization under fairness constraints.
\newblock \emph{arXiv preprint arXiv:1802.08626}, 2018.

\bibitem[Dwork et~al.(2012)Dwork, Hardt, Pitassi, Reingold, and
  Zemel]{dwork2012fairness}
Cynthia Dwork, Moritz Hardt, Toniann Pitassi, Omer Reingold, and Richard Zemel.
\newblock Fairness through awareness.
\newblock In \emph{Proceedings of the 3rd innovations in theoretical computer
  science conference}, pages 214--226. ACM, 2012.

\bibitem[Eban et~al.(2017)Eban, Schain, Mackey, Gordon, Rifkin, and
  Elidan]{eban2017scalable}
Elad Eban, Mariano Schain, Alan Mackey, Ariel Gordon, Ryan Rifkin, and Gal
  Elidan.
\newblock Scalable learning of non-decomposable objectives.
\newblock In \emph{Artificial Intelligence and Statistics}, pages 832--840,
  2017.

\bibitem[Edwards and Storkey(2015)]{edwards2015censoring}
Harrison Edwards and Amos Storkey.
\newblock Censoring representations with an adversary.
\newblock \emph{arXiv preprint arXiv:1511.05897}, 2015.

\bibitem[Feldman(2015)]{feldman2015computational}
Michael Feldman.
\newblock Computational fairness: Preventing machine-learned discrimination.
\newblock 2015.

\bibitem[Feldman et~al.(2015)Feldman, Friedler, Moeller, Scheidegger, and
  Venkatasubramanian]{feldman2015certifying}
Michael Feldman, Sorelle~A Friedler, John Moeller, Carlos Scheidegger, and
  Suresh Venkatasubramanian.
\newblock Certifying and removing disparate impact.
\newblock In \emph{Proceedings of the 21th ACM SIGKDD International Conference
  on Knowledge Discovery and Data Mining}, pages 259--268. ACM, 2015.

\bibitem[Fish et~al.(2015)Fish, Kun, and Lelkes]{fish2015fair}
Benjamin Fish, Jeremy Kun, and Ad{\'a}m~D Lelkes.
\newblock Fair boosting: a case study.
\newblock In \emph{Workshop on Fairness, Accountability, and Transparency in
  Machine Learning}. Citeseer, 2015.

\bibitem[Friedlander and Gupta(2006)]{friedlander2006minimizing}
Michael~P Friedlander and Maya~R Gupta.
\newblock On minimizing distortion and relative entropy.
\newblock \emph{IEEE Transactions on Information Theory}, 52\penalty0
  (1):\penalty0 238--245, 2006.

\bibitem[Goh et~al.(2016)Goh, Cotter, Gupta, and
  Friedlander]{goh2016satisfying}
Gabriel Goh, Andrew Cotter, Maya Gupta, and Michael~P Friedlander.
\newblock Satisfying real-world goals with dataset constraints.
\newblock In \emph{Advances in Neural Information Processing Systems}, pages
  2415--2423, 2016.

\bibitem[Guegan and Hassani(2018)]{loans}
Dominique Guegan and Bertrand Hassani.
\newblock Regulatory learning: how to supervise machine learning models? an
  application to credit scoring.
\newblock \emph{The Journal of Finance and Data Science}, 2018.

\bibitem[Guimaraes and Tofighi(2018)]{tsa}
Abel Ag~Rb Guimaraes and Ghassem Tofighi.
\newblock Detecting zones and threat on 3d body in security airports using deep
  learning machine.
\newblock \emph{arXiv preprint arXiv:1802.00565}, 2018.

\bibitem[Hardt et~al.(2016)Hardt, Price, Srebro, et~al.]{hardt2016equality}
Moritz Hardt, Eric Price, Nati Srebro, et~al.
\newblock Equality of opportunity in supervised learning.
\newblock In \emph{Advances in neural information processing systems}, pages
  3315--3323, 2016.

\bibitem[Jiang(2017)]{jiang2017density}
Heinrich Jiang.
\newblock Density level set estimation on manifolds with dbscan.
\newblock In \emph{International Conference on Machine Learning}, pages
  1684--1693, 2017.

\bibitem[Kamiran and Calders(2009)]{kamiran2009classifying}
Faisal Kamiran and Toon Calders.
\newblock Classifying without discriminating.
\newblock In \emph{Computer, Control and Communication, 2009. IC4 2009. 2nd
  International Conference on}, pages 1--6. IEEE, 2009.

\bibitem[Kamiran and Calders(2012)]{kamiran2012data}
Faisal Kamiran and Toon Calders.
\newblock Data preprocessing techniques for classification without
  discrimination.
\newblock \emph{Knowledge and Information Systems}, 33\penalty0 (1):\penalty0
  1--33, 2012.

\bibitem[Kearns et~al.(2017)Kearns, Neel, Roth, and Wu]{kearns2017preventing}
Michael Kearns, Seth Neel, Aaron Roth, and Zhiwei~Steven Wu.
\newblock Preventing fairness gerrymandering: Auditing and learning for
  subgroup fairness.
\newblock \emph{arXiv preprint arXiv:1711.05144}, 2017.

\bibitem[Kleinberg et~al.(2016)Kleinberg, Mullainathan, and
  Raghavan]{kleinberg2016inherent}
Jon Kleinberg, Sendhil Mullainathan, and Manish Raghavan.
\newblock Inherent trade-offs in the fair determination of risk scores.
\newblock \emph{arXiv preprint arXiv:1609.05807}, 2016.

\bibitem[Komiyama et~al.(2018)Komiyama, Takeda, Honda, and
  Shimao]{komiyama2018nonconvex}
Junpei Komiyama, Akiko Takeda, Junya Honda, and Hajime Shimao.
\newblock Nonconvex optimization for regression with fairness constraints.
\newblock In \emph{International Conference on Machine Learning}, pages
  2742--2751, 2018.

\bibitem[Krasanakis et~al.(2018)Krasanakis, Spyromitros-Xioufis, Papadopoulos,
  and Kompatsiaris]{krasanakis2018adaptive}
Emmanouil Krasanakis, Eleftherios Spyromitros-Xioufis, Symeon Papadopoulos, and
  Yiannis Kompatsiaris.
\newblock Adaptive sensitive reweighting to mitigate bias in fairness-aware
  classification.
\newblock In \emph{Proceedings of the 2018 World Wide Web Conference on World
  Wide Web}, pages 853--862. International World Wide Web Conferences Steering
  Committee, 2018.

\bibitem[Lichman et~al.(2013)]{lichman2013uci}
Moshe Lichman et~al.
\newblock Uci machine learning repository, 2013.

\bibitem[Narasimhan(2018)]{narasimhan2018learning}
Harikrishna Narasimhan.
\newblock Learning with complex loss functions and constraints.
\newblock In \emph{International Conference on Artificial Intelligence and
  Statistics}, pages 1646--1654, 2018.

\bibitem[Niyogi et~al.(2008)Niyogi, Smale, and Weinberger]{niyogi2008finding}
Partha Niyogi, Stephen Smale, and Shmuel Weinberger.
\newblock Finding the homology of submanifolds with high confidence from random
  samples.
\newblock \emph{Discrete \& Computational Geometry}, 39\penalty0
  (1-3):\penalty0 419--441, 2008.

\bibitem[Pedregosa et~al.(2011)Pedregosa, Varoquaux, Gramfort, Michel, Thirion,
  Grisel, Blondel, Prettenhofer, Weiss, Dubourg, et~al.]{pedregosa2011scikit}
Fabian Pedregosa, Ga{\"e}l Varoquaux, Alexandre Gramfort, Vincent Michel,
  Bertrand Thirion, Olivier Grisel, Mathieu Blondel, Peter Prettenhofer, Ron
  Weiss, Vincent Dubourg, et~al.
\newblock Scikit-learn: Machine learning in python.
\newblock \emph{Journal of machine learning research}, 12\penalty0
  (Oct):\penalty0 2825--2830, 2011.

\bibitem[Pedreshi et~al.(2008)Pedreshi, Ruggieri, and
  Turini]{pedreshi2008discrimination}
Dino Pedreshi, Salvatore Ruggieri, and Franco Turini.
\newblock Discrimination-aware data mining.
\newblock In \emph{Proceedings of the 14th ACM SIGKDD international conference
  on Knowledge discovery and data mining}, pages 560--568. ACM, 2008.

\bibitem[Pleiss et~al.(2017)Pleiss, Raghavan, Wu, Kleinberg, and
  Weinberger]{pleiss2017fairness}
Geoff Pleiss, Manish Raghavan, Felix Wu, Jon Kleinberg, and Kilian~Q
  Weinberger.
\newblock On fairness and calibration.
\newblock In \emph{Advances in Neural Information Processing Systems}, pages
  5680--5689, 2017.

\bibitem[ProPublica(2018)]{propublica2018}
ProPublica.
\newblock Compas recidivism risk score data and analysis, Mar 2018.
\newblock URL
  \url{https://www.propublica.org/datastore/dataset/compas-recidivism-risk-score-data-and-analysis}.

\bibitem[Woodworth et~al.(2017)Woodworth, Gunasekar, Ohannessian, and
  Srebro]{woodworth2017learning}
Blake Woodworth, Suriya Gunasekar, Mesrob~I Ohannessian, and Nathan Srebro.
\newblock Learning non-discriminatory predictors.
\newblock In \emph{Conference on Learning Theory}, pages 1920--1953, 2017.

\bibitem[Zafar et~al.(2015)Zafar, Valera, Rodriguez, and
  Gummadi]{zafar2015fairness}
Muhammad~Bilal Zafar, Isabel Valera, Manuel~Gomez Rodriguez, and Krishna~P
  Gummadi.
\newblock Fairness constraints: Mechanisms for fair classification.
\newblock \emph{arXiv preprint arXiv:1507.05259}, 2015.

\bibitem[Zemel et~al.(2013)Zemel, Wu, Swersky, Pitassi, and
  Dwork]{zemel2013learning}
Rich Zemel, Yu~Wu, Kevin Swersky, Toni Pitassi, and Cynthia Dwork.
\newblock Learning fair representations.
\newblock In \emph{International Conference on Machine Learning}, pages
  325--333, 2013.

\bibitem[{\v{Z}}liobaite et~al.(2011){\v{Z}}liobaite, Kamiran, and
  Calders]{vzliobaite2011handling}
Indre {\v{Z}}liobaite, Faisal Kamiran, and Toon Calders.
\newblock Handling conditional discrimination.
\newblock In \emph{Data Mining (ICDM), 2011 IEEE 11th International Conference
  on}, pages 992--1001. IEEE, 2011.

\end{thebibliography}
\bibliographystyle{plainnat}
}

\clearpage
{
\appendix
\onecolumn
{\Large \bf Appendix}
\section{Sampling Technique}

We present an alternative to the weighting technique.
 For the {\em sampling} technique, we note that the distribution $P(Y=y) \propto \ybiased(y|x) \cdot \exp\left\{ \sum_{k=1}^K \lambda_k c_k(x,y) \right\}$ corresponds to the conditional distribution, 
\begin{equation*}
P(A=y~\text{and}~B=y | A = B),    
\end{equation*}
where $A$ is a random variable sampled from $\ybiased(y|x)$ and $B$ is a random variable sampled from the distribution $P(B=y)\propto\exp\left\{ \sum_{k=1}^K \lambda_k c_k(x,y) \right\}$.
Therefore, in our training procedure for $\model$, given a data point $(x,y)\sim\D$, where $y$ is sampled according to $\ybiased$ (i.e., $A$), we sample a value $y^\prime$ from the random variable $B$, and train $\model$ on  $(x,y)$ if and only if $y = y^\prime$.  This procedure corresponds to training $\model$ on data points $(x,y)$ with $y$ sampled according to the true, unbiased label function $\ytrue(x)$.

The sampling technique ignores or skips data points when $A\ne B$ (i.e., when the sample from $P(B=y)$ does not match the observed label).  In cases where the cardinality of the labels is large, this technique may ignore a large number of examples, hampering training.  For this reason, the {\em weighting} technique may be more practical.

\section{Algorithms for Other Notions of Fairness}

{\bf Equal Opportunity}: Algorithm~\ref{alg:demparity} can be directly used by replacing the demographic parity constraints with equal opportunity constraints. Recall that in equal opportunity, the goal is for the positive prediction rates on the positive examples of the protected group $\G$ to match that of the overall. If the positive prediction rate for positive examples $\G$ is less than that of the overall, then Algorithm~\ref{alg:demparity} will up-weight the examples of $\G$ which are positively labeled. This encourages the classifier to be more accurate on the positively labeled examples of $\G$, which in other words means that it will encourage the classifier to increase its positive prediction rate on these examples, thus leading to a classifier satisfying  equal opportunity. In this way, the same intuitions supporting the application of Algorithm~\ref{alg:demparity} to demographic parity or disparate impact also support its application to equal opportunity.  We note that in practice, we do not have access to the true labels function, so we approximate the constraint violation $\E_{x\sim\D_{[n]}}[\langle h(x), c_k(x) \rangle]$ using the observed labels as $\E_{(x,y)\sim\D_{[n]}}[h(x)\cdot c_k(x, y)]$.

\begin{algorithm}[H]
   \caption{Training a fair classifier for Equalized Odds.}
   \label{alg:equalodds}
\begin{algorithmic}
   \STATE {\bf Inputs}: Learning rate $\eta$, number of loops $T$, training data $\D_{[n]} = \{(x_i, y_i)\}_{i=1}^N$, classification procedure $H$. True positive rate constraints $c_1^{TP},...,c_K^{TP}$ and false positive rate constraints $c_1^{FP},...,c_K^{FP}$ respectfully corresponding to protected groups $\G_1,...,\G_K$.
   \STATE Initialize $\lambda_1^{TP},...,\lambda_K^{TP}, \lambda_1^{FP},...,\lambda_K^{FP}$ to $0$ and $w_1 = w_2 = \cdots = w_n = 1$.
    Let $h := H(\D_{[n]}, \{w_i\}_{i=1}^n)$
   \FOR{$t = 1,...,T$}
   \STATE Let $\Delta_k^{A} := \E_{x\sim\D_{[n]}}[\langle h(x), c_k^{A}(x) \rangle]$ for $k \in [K]$ and $A \in \{TP, FP\}$.
   \STATE Update $\lambda_k^{A} = \lambda_k^{A} - \eta \cdot \Delta_k^{A}$ for $k  \in [K]$ and $A \in \{TP, FP\}$.
   \STATE  $\widetilde{w_i}^{T} := \exp\left( \sum_{k=1}^K \lambda_k^{TP} \cdot \1[x \in \G_k] \right)$ for $i \in [n]$.
    \STATE  $\widetilde{w_i}^{F} := \exp\left(-\sum_{k=1}^K \lambda_k^{FP} \cdot \1[x \in \G_k] \right)$ for $i \in [n]$.
   \STATE Let $w_i = \widetilde{w_i}^{T} / (1 + \widetilde{w_i}^{T})$ if $y_i = 1$, otherwise $w_i = \widetilde{w_i}^{F} / (1 + \widetilde{w_i}^{F})$ for $i \in [n]$
   \STATE Update $h = H(\D_{[n]}, \{w_i\}_{i=1}^n)$
   \ENDFOR 
   \STATE {\bf Return} $h$
\end{algorithmic}
\end{algorithm}

{\bf Equalized Odds}: Recall that equalized odds requires that the conditions for equal opportunity (regarding the true positive rate) to be satisfied and in addition, the false positive rates for each protected group match the false positive rate of the overall. Thus, as before, for each true positive rate constraint, we see that if the examples of $\G$ have a lower true positive rate than the overall, then up-weighting positively labeled examples in $\G$ will encourage the classifier to increase its accuracy on the positively labeled examples of $\G$, thus increasing the true positive rate on $\G$. Likewise, if the examples of $\G$ have a higher false positive rate than the overall, then up-weighting the negatively labeled examples of $\G$ will encourage the classifier to be more accurate on the negatively labeled examples of $\G$, thus decreasing the false positive rate on $\G$. This forms the intuition behind Algorithm~\ref{alg:equalodds}. We again approximate the constraint violation $\E_{x\sim\D_{[n]}}[\langle h(x), c_k^A(x) \rangle]$ using the observed labels as $\E_{(x,y)\sim\D_{[n]}}[h(x)\cdot c_k^A(x, y)]$ for $A\in\{TP,FP\}$.

{\bf More general constraints}: It is clear that our strategy can be further extended to any constraint that can be expressed as a function of the true positive rate and false positive rate over any subsets (i.e. protected groups) of the data. Examples that arise in practice include equal accuracy constraints, where the accuracy of certain subsets of the data must be approximately the same in order to not disadvantage certain groups, and high confidence samples, where there are a number of samples which the classifier ought to predict correctly and thus appropriate weighting can enforce that the classifier achieves high accuracy on these examples.

\section{Proof of Proposition~\ref{prop:closedform}}
\begin{proof}[Proof of Proposition~\ref{prop:closedform}]
The constrained optimization problem stated in Assumption~\ref{ass:bias} is a convex optimization with linear constraints.  We may use the Lagrangian method to transform it into the following min-max problem:
\begin{equation*}
\min_{\ydummy:\X\to[0, 1]} \max_{\lambda_1,\dots,\lambda_K\in\R} J(\ydummy,\lambda_1,\dots,\lambda_K),
\end{equation*}
where $J(\ydummy,\lambda_1,\dots,\lambda_K)$ is define  as
\begin{equation*}
 \E_{x\sim \gP}\left[\dkl(\ydummy(x) || \ytrue(x)) + \sum_{k=1}^K\lambda_k (\langle \ydummy(x), c_k(x) \rangle -\epsilon_k) \right].
\end{equation*}
Note that the KL-divergence may be written as an inner product:
\begin{equation*}
    \dkl(\ydummy(x) || \ytrue(x)) = \langle \ydummy(x), \log \ydummy(x) - \log \ytrue(x) \rangle.
\end{equation*}
Therefore, we have
\begin{align*}
 J(\ydummy,\lambda_1,\dots,\lambda_K) = \E_{x\sim \gP}\Big[&\Big\langle \ydummy(x), \log \ydummy(x) - \log \ytrue(x)+ \sum_{k=1}^K\lambda_k c_k(x)\Big\rangle  - \sum_{k=1}^K\lambda_k \epsilon_k \Big].
\end{align*}
In terms of $\ydummy$, this is a classic convex optimization problem~\citep{botev2011generalized}.  Its optimum is a Boltzmann distribution of the following form:
\begin{equation*}
\ydummy^*(y|x) \propto \exp\left\{ \log \ytrue(y|x) - \sum_{k=1}^K\lambda_k c_k(x,y) \right\}.
\end{equation*}
The desired claim immediately follows.
\end{proof}

\section{Proof of Theorem~\ref{theo:rates}}

\begin{proof}[Proof of Theorem~\ref{theo:rates}]
The corresponding weight for each sample $(x, y)$ is $w(x)$ if $y=1$ and $1 - w(x)$ if $y=0$, where
\begin{align*}
    w(x) := \sigma\left(\sum_{i=1}^K \lambda_k \cdot c_k(x, 1) \right),
\end{align*}
$\sigma(t) := \exp(t) / (1 + \exp(t))$.
Then, we have by Proposition~\ref{prop:closedform} that
\begin{align}\label{eqn:mult_condition}
    \ytrue(x) = \frac{w(x) \cdot \ybiased(x) }{w(x) \cdot \ybiased(x) + (1 - w(x)) (1 - \ybiased(x))}.
\end{align}

Let us denote $w_i$ the weight for example $(x_i, y_i)$.
Suppose that $h^*$ is the optimal learner on the re-weighted objective. That is,
\begin{align*}
    h^* \in \argmin_{h\in\mathcal{H}} J(h)
\end{align*}
where  $J(h) := \frac{1}{n} \sum_{i=1}^n w_i\cdot   \ell(h(x_i), y_i)$ and  $\ell(\hat{y}, y) = (\hat{y} - y)^2$.
Let us partition $\mathcal{X}$ into a grid of $D$-dimensional hypercubes with diameter $R$, and let this collection be $\Omega_R$. For each $B \in \Omega_R$, let us denote the center of $B$ as $B_c$. Define
\begin{align*}
    J_{R}(h) &:= \frac{1}{n} \sum_{B \in \Omega_B} \Big( (1 - \ybiased(B_c)) (1 - w(B_c)) h(B_c)^2 +  \ybiased(B_c) w(B_c) (1 - h(B_c))^2 \Big)\cdot |B \cap X_{[n]}| ,
\end{align*}
where $X_{[n]} := \{x_1,..,x_n\}$.

We now show that $|J_{R}(h) - J(h)| \le E_{R, n} := 6LR + \frac{ C\cdot \log(2/\delta)}{n}\sum_{B \in\Omega_R} \sqrt{ |B\cap X_{[n]}|}$. We have
\begin{align*}
    J(h) &= \frac{1}{n} \sum_{i=1}^n w_i\cdot   \ell(h(x_i), y_i) = \frac{1}{n} \sum_{B \in \Omega_R} \sum_{x_i \in B} w_i \cdot  \ell(h(x_i), y_i) \\
    &=  \frac{1}{n} \sum_{B \in \Omega_R} \left(\sum_{\substack{x_i \in B \\ y_i = 0}} w_i \cdot h(x_i)^2 +  \sum_{\substack{x_i \in B\\ y_i = 1}} w_i\cdot (1 -  h(x_i))^2 \right)\\
    &\le \frac{1}{n} \sum_{B \in \Omega_R}   \left(\sum_{\substack{x_i \in B \\ y_i = 0}} w_i \cdot h(B_c)^2 +  \sum_{\substack{x_i \in B\\ y_i = 1}} w_i\cdot (1 - h(B_c))^2 \right) + 4L^2 R^2 +  4 LR\\
    &\le\frac{1}{n} \sum_{B \in \Omega_R} \Big( (1 - \ybiased(B_c))(1  - w(B_c)) h(B_c)^2  + \ybiased(B_c) w(B_c) (1 - h(B_c))^2\Big)\cdot |B \cap X_{[n]}| \\
 &\hspace{0.5cm}+ 5LR +4L^2R^2   + \frac{ C\cdot \log(2/\delta)}{n}\sum_{B \in\Omega_R} \sqrt{ |B\cap X_{[n]}|} \\
 &= J_R(h) +  5LR +4L^2R^2   + \frac{ C\cdot \log(2/\delta)}{n}\sum_{B \in\Omega_R} \sqrt{ |B\cap X_{[n]}|}  \\
 &\le J_{R} (h) +  E_{R, n}.
\end{align*}
where the first inequality holds by smoothness of $h$; the second inequality holds because the value of $w(x)$ for each $x \in B$ will be the same assuming that $R$ is chosen sufficiently small to not allow examples from different protected attributes to be in the same $B \in \Omega_R$ and then applying Bernstein's concentration inequality so that this holds with probability at least $1 - \delta$  for some constant $C > 0$; finally the last inequality holds for $R$ sufficiently small. 
Similarly, we can show that $ J(h) \ge J_{R}(h) - E_{R, n}$, as desired.

It is clear that $\ytrue \in \argmin_{h\in\mathcal{H}} J_{R}(h)$.

We now bound the amount $h^*$ can deviate from $\ytrue$ at $B_C$ on average. Let $h^*(B_c) = \ytrue(B_c) + \epsilon_B$.
Then, we have
\begin{align*}
 2E_{R, n} &\ge  J_R(h^*) - J_R(\ytrue)
\end{align*}
 because otherwise,
\begin{align*}
     J(h^*)  \ge J_{R}(h^*) - E_{R, n} > J_R(\ytrue) + E_{R, n} \ge J(\ytrue),
\end{align*}
contradicting the fact that $h^*$ minimizes $J$. 

We thus have
\begin{align*}
     2E_{R, n} &\ge  J_R(h^*) - J_R(\ytrue) \\
     &= \frac{1}{n} \sum_{B \in \Omega_B} \Big( (1 - \ybiased(B_c))(1  - w(B_c)) (h^*(B_c)^2 - \ytrue(B_c)^2) \\
     &\hspace{1.5cm} + \ybiased(B_c) w(B_c) ((1 - h^*(B_c))^2 - (1 -\ytrue(B_c))^2) \Big)\cdot |B \cap X_{[n]}| \\
     &= \frac{1}{n} \sum_{B \in \Omega_B} \epsilon_B^2 \Big((1 - \ybiased(B_c))(1  - w(B_c))  + \ybiased(B_c) w(B_c)\Big) \cdot |B \cap X_{[n]}| \\
     &\ge \frac{\exp(-K\Lambda)}{1 + \exp(-K\Lambda)} \cdot \frac{1}{n} \sum_{B \in \Omega_B} \epsilon_B^2\cdot |B \cap X_{[n]}|,
\end{align*}
where the last inequality follows by lower bounding $\min\{w(B_C), 1  - w(B_C) \}$ in terms of $\Lambda$.

Thus,
\begin{align*}
    \frac{1}{n} \sum_{B \in \Omega_B} \epsilon_B^2\cdot |B \cap X_{[n]}|
    \le \frac{2 (1 + \exp(-K\Lambda))}{\exp(-K\Lambda)} \cdot E_{R, n}.
\end{align*}
By the smoothness of $h^*$ and $\ytrue$, it follows that
\begin{align*}
\frac{1}{n}\sum_{x \in X_{[n]}} (h^*(x) - \ytrue(x))^2 \le \frac{4 (1 + \exp(-K\Lambda))}{\exp(-K\Lambda)} \cdot E_{R, n}.
\end{align*}

All that remains is to bound this quantity.
By Lemma~\ref{lemma:rms}, there exists constant $C_1$ such that 
$\frac{1}{n} \sum_{B\in \Omega_R} \sqrt{ |B\cap X_{[n]}|} \le C_1 \sqrt{\frac{1}{n R^D}}$.
We now see that for $n$ sufficiently large depending on the data distribution (with the understanding that $R\rightarrow 0$ and thus it suffices to not consider dominated terms), there exists constant $C''$ such that the above is bounded by
\begin{align*}
     C''\left(R +  \log(2/\delta) \sqrt{\frac{1}{n  R^D}}\right).
\end{align*}
Now, choosing $R = n^{-1/(2+D)}\cdot \log(2/\delta)^{2/(2+D)}$, we obtain the desired result.
\end{proof}

\begin{lemma}\label{lemma:rms}
There exists constant $C_1$ such that 
\begin{align*}
    \frac{1}{n} \sum_{B\in \Omega_R} \sqrt{ |B\cap X_{[n]}|} \le C_1 \sqrt{\frac{1}{n R^D}}.
\end{align*}
\end{lemma}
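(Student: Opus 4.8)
The plan is to reduce the bound to a simple counting estimate combined with the Cauchy--Schwarz inequality. For each box $B\in\Omega_R$ write $n_B := |B\cap X_{[n]}|$ for the number of sample points it contains. Since $\Omega_R$ is a partition of $\X$, every one of the $n$ points lands in exactly one box, so $\sum_{B\in\Omega_R} n_B = n$. The whole statement is then a purely combinatorial fact about how $\sum_B \sqrt{n_B}$ relates to the number of occupied cells.

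First I would bound the number of boxes $M := |\Omega_R|$. Because $\X$ is compact (Assumption~\ref{assumption:smoothness}), it is contained in a bounded region, so the total volume occupied by cells meeting $\X$ is at most the volume of a fixed $R$-neighborhood of $\X$, which stays bounded by a constant for all sufficiently small $R$. Each cell has volume of order $R^D$ (its side length is proportional to its diameter $R$), so dividing the neighborhood volume by the cell volume gives $M \le C'\,R^{-D}$ for a constant $C'$ depending only on $\X$ and $D$.

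Next I would apply Cauchy--Schwarz, writing each summand as $1\cdot\sqrt{n_B}$:
\[
\sum_{B\in\Omega_R}\sqrt{n_B} \;\le\; \Big(\sum_{B\in\Omega_R} 1\Big)^{1/2}\Big(\sum_{B\in\Omega_R} n_B\Big)^{1/2} \;=\; \sqrt{M}\cdot\sqrt{n}.
\]
Combining this with the cell count $M \le C' R^{-D}$ gives $\sum_{B}\sqrt{n_B} \le \sqrt{C'\,n/R^D}$, and dividing through by $n$ yields the claim with $C_1 = \sqrt{C'}$.

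The only genuine obstacle is the box-counting step, and even that is routine: the essential point is that compactness forces the number of nonempty cells to scale like the reciprocal of the cell volume, namely $R^{-D}$. Everything else is a one-line application of Cauchy--Schwarz (equivalently, concavity of $\sqrt{\cdot}$ via Jensen). I would remark that this estimate is in fact tight in the worst case, being saturated when the points are spread evenly across all $\Theta(R^{-D})$ cells, so no finer structure of the distribution $\gP$ is needed for the bound.
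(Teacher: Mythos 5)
Your proof is correct and is essentially identical to the paper's: your Cauchy--Schwarz step $\sum_B \sqrt{n_B} \le \sqrt{M}\sqrt{n}$ is the same inequality the paper invokes as the Root-Mean-Square--Arithmetic-Mean inequality, and your volume-counting bound $|\Omega_R| \le C' R^{-D}$ from compactness is the same cell-count estimate the paper states (you merely spell it out in slightly more detail).
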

\begin{proof}
We have by Root-Mean-Square Arithmetic-Mean Inequality that
\begin{align*}
&\frac{1}{n} \sum_{B\in \Omega_R} \sqrt{ |B\cap X_{[n]}|} = 
\frac{|\Omega_R|}{n} \frac{1}{|\Omega_R|}\sum_{B\in \Omega_R} \sqrt{ |B\cap X_{[n]}|} \le \frac{|\Omega_R|}{n} \sqrt{\frac{1}{|\Omega_R|}\sum_{B\in \Omega_R} |B\cap X_{[n]}|} = \sqrt{\frac{|\Omega_R|}{n}}  \le C_1 \sqrt{\frac{1}{n R^D}},
\end{align*}
for some $C_1 > 0$
where the inequality follows because the support $\mathcal{X}$ is compact and thus the size of the hypercube partition will grow at a rate of $1/R^D$.
\end{proof}

\section{Proof of Theorem~\ref{theo:rates_manifold}}

The following gives a lower bound on the volume of a ball in $\mathcal{X}$. The result follows from Lemma 5.3 of \cite{niyogi2008finding} and has been used before e.g. \citep{balakrishnan2013cluster,jiang2017density} so we omit the proof. 

\begin{lemma}\label{lemma:manifold_ball} Let $\mathcal{X}$ be a compact $d$-dimensional Riemannian submanifold of $\mathbb{R}^D$ with finite volume and finite condition number $1/\tau$.
Suppose that $0 < r < 1/\tau$. Then, 
\begin{align*}
    \text{Vol}_d(B(x, r) \cap \mathcal{X}) \ge v_dr^d(1 - \tau^2r^2),
\end{align*}
where $v_d$ denotes the volume of a unit ball in $\mathbb{R}^d$ and $\text{Vol}_d$ is the volume w.r.t. the uniform measure on $\mathcal{X}$.
\end{lemma}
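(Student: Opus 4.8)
The plan is to exploit the geometric content of the condition number: finite condition number forces $\mathcal{X}$ to stay close to its tangent plane at every scale below the reach $1/\tau$, and this curvature control is exactly what is needed to compare the intersection $B(x,r)\cap\mathcal{X}$ with a flat $d$-dimensional ball. Throughout I identify the tangent space $T_x\mathcal{X}$ with $\mathbb{R}^d$ (placing $x$ at the origin) and let $\pi:\mathbb{R}^D\to T_x\mathcal{X}$ be the orthogonal projection. I will use two facts about $\pi$: first, it is $1$-Lipschitz, hence does not increase $d$-dimensional Hausdorff volume, so $\mathrm{Vol}_d(B(x,r)\cap\mathcal{X})\ge\mathrm{Vol}_d\big(\pi(B(x,r)\cap\mathcal{X})\big)$; second, near $x$ the manifold is a graph over $T_x\mathcal{X}$, so its projected shadow is a genuine open region in $\mathbb{R}^d$ whose area can be bounded below. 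The statement then reduces to showing that this shadow contains a Euclidean $d$-ball of radius close to $r$.

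The key quantitative input is the standard secant estimate associated with the condition number (Niyogi et al.): for every $p\in\mathcal{X}$, the component of $p-x$ normal to $T_x\mathcal{X}$ has norm at most $\tfrac{\tau}{2}\|p-x\|^2$, i.e. the secant $p-x$ makes an angle with the tangent plane controlled by $\tau\|p-x\|$. First I would use this to write $B(x,r)\cap\mathcal{X}$ --- or at least the connected sheet through $x$ --- as the graph of a normal-valued map $f$ over a set $U\subseteq T_x\mathcal{X}$ with $f(0)=0$ and $\|f(u)\|\le\tfrac{\tau}{2}\|u\|^2$; the condition number bounds the turning of tangent planes by $\tau\|p-x\|$, which keeps $\pi$ a local diffeomorphism and prevents the sheet from folding back for $r<1/\tau$. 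Next I would identify the tangent directions that survive: a graph point $(u,f(u))$ lies in $B(x,r)$ as soon as $\|u\|^2+\tfrac{\tau^2}{4}\|u\|^4\le r^2$, so $U$ contains the ball $B_d(0,r')$ with $r'=r\sqrt{1-\tfrac{\tau^2}{4}r^2}$.

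Combining the pieces, the area formula applied to the graph gives $\mathrm{Vol}_d(B(x,r)\cap\mathcal{X})=\int_U\sqrt{\det(I+(\nabla f)^{\!\top}\nabla f)}\,du\ge\mathrm{Vol}_d(U)\ge v_d(r')^d$, and expanding $(r')^d=r^d(1-\tfrac{\tau^2}{4}r^2)^{d/2}$ and bounding the correction yields the stated lower bound $v_d r^d(1-\tau^2 r^2)$ for $r<1/\tau$. An equivalent route I would keep in reserve avoids the graph representation: since geodesic distance dominates chordal distance, the intrinsic ball satisfies $B^{\mathrm{geo}}(x,r)\subseteq B(x,r)\cap\mathcal{X}$, and a finite condition number bounds the sectional curvatures of $\mathcal{X}$ by $\tau^2$ and the injectivity radius from below; G\"unther's volume comparison theorem then lower-bounds $\mathrm{Vol}_d(B^{\mathrm{geo}}(x,r))$ by the volume of a ball of radius $r$ in the space form of curvature $\tau^2$, which again expands to $v_d r^d(1-O(\tau^2 r^2))$.

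I expect the main obstacle to be the fullness step: rigorously certifying that $\pi$ maps $B(x,r)\cap\mathcal{X}$ onto a region actually containing the whole ball $B_d(0,r')$, with no holes and no interference from other sheets of $\mathcal{X}$ entering $B(x,r)$. This is precisely where the condition number is indispensable --- it guarantees injectivity of $\pi$ on the relevant sheet and rules out nearby sheets (a second sheet within distance $r<1/\tau$ would force a point with two nearest points on $\mathcal{X}$, contradicting the reach) --- and it is the step requiring degree theory or the inverse function theorem rather than a routine calculation. The only other mild subtlety is pinning down the exact constant in the $O(\tau^2 r^2)$ correction; since the downstream use via Lemma~\ref{lemma:rms} only needs $\mathrm{Vol}_d(B(x,r)\cap\mathcal{X})\ge c\,r^d$ for small $r$, the precise constant is not delicate, and matching the clean form $1-\tau^2 r^2$ follows from the quantitative secant estimate above.
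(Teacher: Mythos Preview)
Your proposal is correct and in fact goes beyond the paper: the paper does not prove this lemma at all but simply cites Lemma~5.3 of Niyogi--Smale--Weinberger (noting prior uses in the literature) and omits the argument. The tangent-plane-projection argument you sketch is precisely the NSW proof, and your flagging of the fullness step as the only delicate point, together with the observation that the exact $O(\tau^2 r^2)$ constant is immaterial downstream (Lemma~\ref{lemma:rms_manifold} needs only $\mathrm{Vol}_d\ge c\,r^d$ for small $r$), is accurate.
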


We now give an analogue to Lemma~\ref{lemma:rms} but for the manifold setting.

\begin{lemma}\label{lemma:rms_manifold} Suppose the conditions of Theorem~\ref{theo:rates_manifold}.
There exists constant $C_1$ such that 
\begin{align*}
    \frac{1}{n} \sum_{B\in \Omega_R} \sqrt{ |B\cap X_{[n]}|} \le C_1 \sqrt{\frac{1}{n R^d}}.
\end{align*}
\end{lemma}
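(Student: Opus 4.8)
The plan is to reduce the statement to a covering bound on $\mathcal{X}$ and then to establish that bound through the volume estimate of Lemma~\ref{lemma:manifold_ball}. Inspecting the proof of Lemma~\ref{lemma:rms}, the Root-Mean-Square--Arithmetic-Mean chain
\begin{align*}
\frac{1}{n}\sum_{B\in\Omega_R}\sqrt{|B\cap X_{[n]}|}\le\sqrt{\frac{|\Omega_R|}{n}}
\end{align*}
uses only the identity $\sum_{B\in\Omega_R}|B\cap X_{[n]}|=n$, so it carries over verbatim to the manifold setting. Hence I would argue that it suffices to show that the number of diameter-$R$ hypercubes meeting $\mathcal{X}$ obeys $|\Omega_R|\le C_1' R^{-d}$; substituting this estimate then produces the claimed $C_1\sqrt{1/(nR^d)}$. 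The entire difference from the ambient-dimension case is thus concentrated in replacing the covering estimate $|\Omega_R|\lesssim R^{-D}$ by $|\Omega_R|\lesssim R^{-d}$.

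To obtain $|\Omega_R|\lesssim R^{-d}$, I would first restrict to $R<1/\tau$ so that Lemma~\ref{lemma:manifold_ball} applies (shrinking $R$ is harmless, since $R\to 0$ in the regime of interest). For each nonempty box $B\in\Omega_R$ I pick a representative point $x_B\in B\cap\mathcal{X}$ and consider the Euclidean ball $B(x_B,R)\subset\mathbb{R}^D$; since $B$ has diameter $R$, this ball contains $B\cap\mathcal{X}$. By Lemma~\ref{lemma:manifold_ball}, for $R$ small enough (say $\tau^2R^2\le 1/2$) each ball captures manifold volume $\mathrm{Vol}_d(B(x_B,R)\cap\mathcal{X})\ge \tfrac{1}{2}v_d R^d$. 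The crucial geometric observation is that the collection $\{B(x_B,R)\}$ overlaps with multiplicity bounded by a constant $c_D$ depending only on the ambient dimension $D$: a fixed $y$ lies in $B(x_B,R)$ only for boxes $B$ within distance $R$ of $y$, and a ball of radius $R$ meets at most $c_D$ grid cells of diameter $R$. Summing the volume bound and playing the bounded overlap against the finite total volume $V:=\mathrm{Vol}_d(\mathcal{X})$ gives
\begin{align*}
|\Omega_R|\cdot\tfrac{1}{2}v_d R^d\le\sum_{B\in\Omega_R}\mathrm{Vol}_d(B(x_B,R)\cap\mathcal{X})\le c_D\,V,
\end{align*}
whence $|\Omega_R|\le 2c_DV/(v_dR^d)=:C_1'R^{-d}$, as required.

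I expect the only real obstacle to be this last covering step, and specifically getting the separation of scales right: the overlap constant $c_D$ is a purely Euclidean packing quantity in $\mathbb{R}^D$, whereas the captured volume scales intrinsically as $R^d$ via Lemma~\ref{lemma:manifold_ball}. It is precisely this decoupling that yields a covering number governed by the manifold dimension $d$ rather than the ambient dimension $D$, and hence the dimension-free improvement in the final rate. Everything else is a transcription of the proof of Lemma~\ref{lemma:rms}. A minor bookkeeping point is to confirm that the finite-condition-number hypothesis enters only through Lemma~\ref{lemma:manifold_ball} (to guarantee the $r<1/\tau$ regime and the accompanying volume lower bound), so that no additional curvature control is needed.
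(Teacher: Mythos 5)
Your proof is correct, and its skeleton matches the paper's: both arguments reduce the lemma to the covering bound $|\Omega_R| \le C' R^{-d}$ and then rerun the Root-Mean-Square--Arithmetic-Mean chain from Lemma~\ref{lemma:rms} verbatim, exactly as you observe. The difference lies in how the covering bound is extracted from Lemma~\ref{lemma:manifold_ball}. The paper asserts directly that every cube $B \in \Omega_R$ satisfies $\mathrm{Vol}_d(B \cap \mathcal{X}) \ge C_2 R^d$ and divides the finite total volume by this, implicitly using that the grid cubes are disjoint. Taken literally, that per-cube bound does not follow from Lemma~\ref{lemma:manifold_ball}: the lemma lower-bounds the volume captured by a ball \emph{centered at a point of $\mathcal{X}$}, and a cube that merely grazes the manifold can capture arbitrarily little $d$-volume. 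Your version repairs exactly this step: inflating each nonempty cube to the ball $B(x_B, R)$ around a representative manifold point $x_B \in B \cap \mathcal{X}$ makes Lemma~\ref{lemma:manifold_ball} genuinely applicable (with $R < 1/\tau$ and $\tau^2 R^2 \le 1/2$ giving the clean bound $\tfrac{1}{2} v_d R^d$), and you pay for the loss of disjointness with the Euclidean overlap constant $c_D$ --- a point of $\mathbb{R}^D$ lies within distance $R$ of at most a $D$-dependent constant number of grid cells of diameter $R$ --- which is harmless since it perturbs only the constant $C_1$ and not the $R^{-d}$ scaling. So your route buys rigor at the one place where the paper's write-up is loose, at the cost of a short packing computation; the dependence of constants and the final substitution into the RMS--AM bound are otherwise identical to the paper's. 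Your closing bookkeeping point is also right: the finite-condition-number hypothesis enters only through Lemma~\ref{lemma:manifold_ball}, and no further curvature control is needed anywhere in the argument.
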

\begin{proof}

We have by Lemma~\ref{lemma:manifold_ball} that there exists $C_2 > 0$ such that for each $B \in \Omega_R$, and $R$ sufficiently small,
$\text{Vol}_d(B \cap \mathcal{X}) \ge C_2\cdot R^d$.
Thus,
\begin{align*}
|\Omega_R| \le \frac{\text{Vol}_d(\mathcal{X})}{ C_2\cdot R^d}.
\end{align*}
We then continue as in Lemma~\ref{lemma:rms} proof and have by Root-Mean-Square Arithmetic-Mean Inequality that
\begin{align*}
&\frac{1}{n} \sum_{B\in \Omega_R} \sqrt{ |B\cap X_{[n]}|} = 
\frac{|\Omega_R|}{n} \frac{1}{|\Omega_R|}\sum_{B\in \Omega_R} \sqrt{ |B\cap X_{[n]}|} \le \frac{|\Omega_R|}{n} \sqrt{\frac{1}{|\Omega_R|}\sum_{B\in \Omega_R} |B\cap X_{[n]}|} = \sqrt{\frac{|\Omega_R|}{n}}  \le C_1 \sqrt{\frac{1}{n R^d}},
\end{align*}
for some $C_1 > 0$, as desired.
\end{proof}

\begin{proof}[Proof of Theorem~\ref{theo:rates_manifold}]
The proof is the same as that of Theorem~\ref{theo:rates} except we can now replace the usage of Lemma~\ref{lemma:rms} with Lemma~\ref{lemma:rms_manifold} to obtain rates in terms of $d$ instead of $D$.
\end{proof}

\section{Further Experimental Details}
\subsection{Baselines}

{\bf Unconstrained (Unc.)}: This method applies logistic regression on the dataset without any consideration for fairness.

{\bf Post-calibration (Cal.)}: This method~\cite{hardt2016equality} first trains without consideration for fairness, and then determines appropriate thresholds for the protected groups such that fairness is satisfied in training. In the case of overlapping  groups, we treat each intersection as their own group. 

{\bf Lagrangian approach (Lagr.)} This method \citep{eban2017scalable} proceeds by jointly training the Lagrangian in both model parameters and Lagrange multipliers and uses a hinge approximation of the constraints to make the Lagrangian differentiable in its input. We fix the model learning rate and use ADAM optimizer with learning rate $0.01$ and train for $100$  passes through the dataset. We also had to select a slack for the constraints as without slack, we often converged to degenerate solutions. We chose the smallest slack in increments of $5\%$ until the procedure returned a non-degenerate solution.  

{\bf Our method}: We use Algorithm~\ref{alg:demparity} for demographic parity, disparate impact, and equal opportunity and Algorithm~\ref{alg:equalodds} for equalized odds. We fix the learning rate $\eta = 1.$ and number of loops $T = 100$ across all of our experiments.

\subsection{MNIST Experiment Details}
We use a three hidden-layer fully-connected neural network with ReLU activations and $1024$ hidden units per layer and train using TensorFlow's ADAM optimizer under default settings for $10000$ iterations with batchsize $50$.

For the calibration technique, we consider the points whose prediction was 2 and then for the lowest softmax probabilities among them, swap the label to the prediction corresponding to the second highest logit so that the prediction rate for 2 is as close to $10\%$ as possible. For the Lagrangian approach, we use the same settings as before, but use the same learning rate as the other procedures for this simulation.
For our method, we adopt the same settings as before.

\section{Additional Experimental Results}

\begin{figure}[H]
  \begin{center}
    \includegraphics[width=0.3\textwidth]{figures/Adult}
    \includegraphics[width=0.3\textwidth]{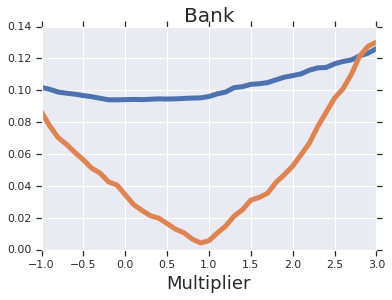} 
    \includegraphics[width=0.3\textwidth]{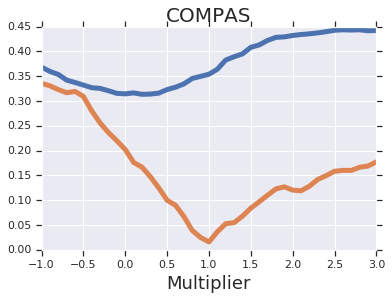}\\
    \includegraphics[width=0.3\textwidth]{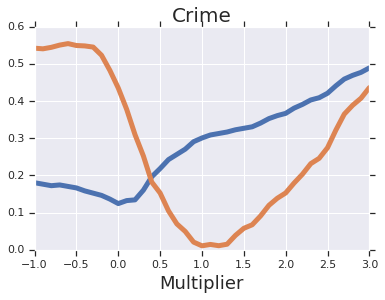}  
    \includegraphics[width=0.3\textwidth]{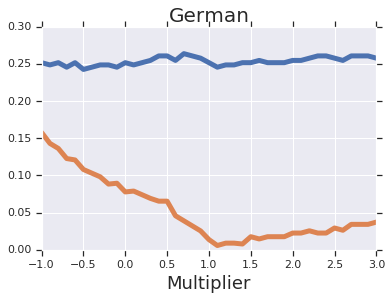}
    \end{center}
  \caption{{\bf Results as $\lambda$ changes}: we show test error and fairness violations for demographic parity as the weightings change. We take the optimal $\lambda = \lambda^*$ found by Algorithm~\ref{alg:demparity}. Then for each value $c$ on the $x$ axis, and we train a classifier with data weights based on the setting $\lambda = c\cdot \lambda^*$ and plot the error and violations. We see that indeed, when $c = 1$, we train based on the $\lambda$ found by Algorithm~\ref{alg:demparity} and thus get the lowest fairness violation. For $c=0$, this corresponds to training on the unweighted dataset and gives us the lowest error. }
	\label{fig:charts}
\end{figure}

}

\end{document}